
\documentclass[10pt,journal]{IEEEtran}
\usepackage{eurosym}
\usepackage{graphicx}
\usepackage{amsmath}
\usepackage{amssymb}
\usepackage{subfigure}
\usepackage{amsfonts}
\usepackage{algorithm}
\usepackage{algpseudocode}
\usepackage{epstopdf}
\usepackage{balance}
\usepackage{placeins}
\usepackage{booktabs}

\setcounter{MaxMatrixCols}{10}

\newtheorem{theorem}{Theorem}
\newtheorem{lemma}{Lemma}

\typeout{TCILATEX Macros for Scientific Word 5.0 <13 Feb 2003>.}
\typeout{NOTICE:  This macro file is NOT proprietary and may be 
freely copied and distributed.}
\makeatletter

\ifx\pdfoutput\relax\let\pdfoutput=\undefined\fi
\newcount\msipdfoutput
\ifx\pdfoutput\undefined
\else
 \ifcase\pdfoutput
 \else 
    \msipdfoutput=1
    \ifx\paperwidth\undefined
    \else
      \ifdim\paperheight=0pt\relax
      \else
        \pdfpageheight\paperheight
      \fi
      \ifdim\paperwidth=0pt\relax
      \else
        \pdfpagewidth\paperwidth
      \fi
    \fi
  \fi  
\fi

%

%
\newcount\@hour\newcount\@minute\chardef\@x10\chardef\@xv60
\def\tcitime{
\def\@time{%
  \@minute\time\@hour\@minute\divide\@hour\@xv
  \ifnum\@hour<\@x 0\fi\the\@hour:%
  \multiply\@hour\@xv\advance\@minute-\@hour
  \ifnum\@minute<\@x 0\fi\the\@minute
  }}%


\def\x@hyperref#1#2#3{%
   \catcode`\~ = 12
   \catcode`\$ = 12
   \catcode`\_ = 12
   \catcode`\# = 12
   \catcode`\& = 12
   \catcode`\% = 12
   \y@hyperref{#1}{#2}{#3}%
}

\def\y@hyperref#1#2#3#4{%
   #2\ref{#4}#3
   \catcode`\~ = 13
   \catcode`\$ = 3
   \catcode`\_ = 8
   \catcode`\# = 6
   \catcode`\& = 4
   \catcode`\% = 14
}

\@ifundefined{hyperref}{\let\hyperref\x@hyperref}{}
\@ifundefined{msihyperref}{\let\msihyperref\x@hyperref}{}

\@ifundefined{qExtProgCall}{\def\qExtProgCall#1#2#3#4#5#6{\relax}}{}
%
%
%
%
\def\QCTOpt[#1]#2{%
  \def\QCTOptB{#1}
  \def\QCTOptA{#2}
}
\def\QCTNOpt#1{%
  \def\QCTOptA{#1}
  \let\QCTOptB\empty
}
\def\Qct{%
  \@ifnextchar[{%
    \QCTOpt}{\QCTNOpt}
}
\def\QCBOpt[#1]#2{%
  \def\QCBOptB{#1}%
  \def\QCBOptA{#2}%
}
\def\QCBNOpt#1{%
  \def\QCBOptA{#1}%
  \let\QCBOptB\empty
}
\def\Qcb{%
  \@ifnextchar[{%
    \QCBOpt}{\QCBNOpt}%
}
\def\PrepCapArgs{%
  \ifx\QCBOptA\empty
    \ifx\QCTOptA\empty
      {}%
    \else
      \ifx\QCTOptB\empty
        {\QCTOptA}%
      \else
        [\QCTOptB]{\QCTOptA}%
      \fi
    \fi
  \else
    \ifx\QCBOptA\empty
      {}%
    \else
      \ifx\QCBOptB\empty
        {\QCBOptA}%
      \else
        [\QCBOptB]{\QCBOptA}%
      \fi
    \fi
  \fi
}
\newcount\GRAPHICSTYPE
\GRAPHICSTYPE=\z@
\def\GRAPHICSPS#1{%
 \ifcase\GRAPHICSTYPE
   \special{ps: #1}%
 \or
   \special{language "PS", include "#1"}%
 \fi
}%
%
%
%

\def\graffile#1#2#3#4{%
    \bgroup
	   \@inlabelfalse
       \leavevmode
       \@ifundefined{bbl@deactivate}{\def~{\string~}}{\activesoff}%
        \raise -#4 \BOXTHEFRAME{%
           \hbox to #2{\raise #3\hbox to #2{\null #1\hfil}}}%
    \egroup
}%
%
\def\draftbox#1#2#3#4{%
 \leavevmode\raise -#4 \hbox{%
  \frame{\rlap{\protect\tiny #1}\hbox to #2%
   {\vrule height#3 width\z@ depth\z@\hfil}%
  }%
 }%
}%
\newcount\@msidraft
\@msidraft=\z@
\let\nographics=\@msidraft
\newif\ifwasdraft
\wasdraftfalse

\def\GRAPHIC#1#2#3#4#5{%
   \ifnum\@msidraft=\@ne\draftbox{#2}{#3}{#4}{#5}%
   \else\graffile{#1}{#3}{#4}{#5}%
   \fi
}
\def\addtoLaTeXparams#1{%
    \edef\LaTeXparams{\LaTeXparams #1}}%
%

\newif\ifBoxFrame \BoxFramefalse
\newif\ifOverFrame \OverFramefalse
\newif\ifUnderFrame \UnderFramefalse

\def\BOXTHEFRAME#1{%
   \hbox{%
      \ifBoxFrame
         \frame{#1}%
      \else
         {#1}%
      \fi
   }%
}

\def\doFRAMEparams#1{\BoxFramefalse\OverFramefalse\UnderFramefalse\readFRAMEparams#1\end}%
\def\readFRAMEparams#1{%
 \ifx#1\end%
  \let\next=\relax
  \else
  \ifx#1i\dispkind=\z@\fi
  \ifx#1d\dispkind=\@ne\fi
  \ifx#1f\dispkind=\tw@\fi
  \ifx#1t\addtoLaTeXparams{t}\fi
  \ifx#1b\addtoLaTeXparams{b}\fi
  \ifx#1p\addtoLaTeXparams{p}\fi
  \ifx#1h\addtoLaTeXparams{h}\fi
  \ifx#1X\BoxFrametrue\fi
  \ifx#1O\OverFrametrue\fi
  \ifx#1U\UnderFrametrue\fi
  \ifx#1w
    \ifnum\@msidraft=1\wasdrafttrue\else\wasdraftfalse\fi
    \@msidraft=\@ne
  \fi
  \let\next=\readFRAMEparams
  \fi
 \next
 }%
%

\def\IFRAME#1#2#3#4#5#6{%
      \bgroup
      \let\QCTOptA\empty
      \let\QCTOptB\empty
      \let\QCBOptA\empty
      \let\QCBOptB\empty
      #6%
      \parindent=0pt
      \leftskip=0pt
      \rightskip=0pt
      \setbox0=\hbox{\QCBOptA}%
      \@tempdima=#1\relax
      \ifOverFrame
          \typeout{This is not implemented yet}%
          \show\HELP
      \else
         \ifdim\wd0>\@tempdima
            \advance\@tempdima by \@tempdima
            \ifdim\wd0 >\@tempdima
               \setbox1 =\vbox{%
                  \unskip\hbox to \@tempdima{\hfill\GRAPHIC{#5}{#4}{#1}{#2}{#3}\hfill}%
                  \unskip\hbox to \@tempdima{\parbox[b]{\@tempdima}{\QCBOptA}}%
               }%
               \wd1=\@tempdima
            \else
               \textwidth=\wd0
               \setbox1 =\vbox{%
                 \noindent\hbox to \wd0{\hfill\GRAPHIC{#5}{#4}{#1}{#2}{#3}\hfill}\\%
                 \noindent\hbox{\QCBOptA}%
               }%
               \wd1=\wd0
            \fi
         \else
            \ifdim\wd0>0pt
              \hsize=\@tempdima
              \setbox1=\vbox{%
                \unskip\GRAPHIC{#5}{#4}{#1}{#2}{0pt}%
                \break
                \unskip\hbox to \@tempdima{\hfill \QCBOptA\hfill}%
              }%
              \wd1=\@tempdima
           \else
              \hsize=\@tempdima
              \setbox1=\vbox{%
                \unskip\GRAPHIC{#5}{#4}{#1}{#2}{0pt}%
              }%
              \wd1=\@tempdima
           \fi
         \fi
         \@tempdimb=\ht1
         \advance\@tempdimb by -#2
         \advance\@tempdimb by #3
         \leavevmode
         \raise -\@tempdimb \hbox{\box1}%
      \fi
      \egroup%
}%
%
\def\DFRAME#1#2#3#4#5{%
  \vspace\topsep
  \hfil\break
  \bgroup
     \leftskip\@flushglue
	 \rightskip\@flushglue
	 \parindent\z@
	 \parfillskip\z@skip
     \let\QCTOptA\empty
     \let\QCTOptB\empty
     \let\QCBOptA\empty
     \let\QCBOptB\empty
	 \vbox\bgroup
        \ifOverFrame 
           #5\QCTOptA\par
        \fi
        \GRAPHIC{#4}{#3}{#1}{#2}{\z@}%
        \ifUnderFrame 
           \break#5\QCBOptA
        \fi
	 \egroup
  \egroup
  \vspace\topsep
  \break
}%
%
\def\FFRAME#1#2#3#4#5#6#7{%
  \@ifundefined{floatstyle}
    {
     \begin{figure}[#1]%
    }
    {
	 \ifx#1h
      \begin{figure}[H]%
	 \else
      \begin{figure}[#1]%
	 \fi
	}
  \let\QCTOptA\empty
  \let\QCTOptB\empty
  \let\QCBOptA\empty
  \let\QCBOptB\empty
  \ifOverFrame
    #4
    \ifx\QCTOptA\empty
    \else
      \ifx\QCTOptB\empty
        \caption{\QCTOptA}%
      \else
        \caption[\QCTOptB]{\QCTOptA}%
      \fi
    \fi
    \ifUnderFrame\else
      \label{#5}%
    \fi
  \else
    \UnderFrametrue%
  \fi
  \begin{center}\GRAPHIC{#7}{#6}{#2}{#3}{\z@}\end{center}%
  \ifUnderFrame
    #4
    \ifx\QCBOptA\empty
      \caption{}%
    \else
      \ifx\QCBOptB\empty
        \caption{\QCBOptA}%
      \else
        \caption[\QCBOptB]{\QCBOptA}%
      \fi
    \fi
    \label{#5}%
  \fi
  \end{figure}%
 }%
%
%
%
%
%
\newcount\dispkind%

\def\makeactives{
  \catcode`\"=\active
  \catcode`\;=\active
  \catcode`\:=\active
  \catcode`\'=\active
  \catcode`\~=\active
}
\bgroup
   \makeactives
   \gdef\activesoff{%
      \def"{\string"}%
      \def;{\string;}%
      \def:{\string:}%
      \def'{\string'}%
      \def~{\string~}%
    }
\egroup

\def\FRAME#1#2#3#4#5#6#7#8{%
 \bgroup
 \ifnum\@msidraft=\@ne
   \wasdrafttrue
 \else
   \wasdraftfalse%
 \fi
 \def\LaTeXparams{}%
 \dispkind=\z@
 \def\LaTeXparams{}%
 \doFRAMEparams{#1}%
 \ifnum\dispkind=\z@\IFRAME{#2}{#3}{#4}{#7}{#8}{#5}\else
  \ifnum\dispkind=\@ne\DFRAME{#2}{#3}{#7}{#8}{#5}\else
   \ifnum\dispkind=\tw@
    \edef\@tempa{\noexpand\FFRAME{\LaTeXparams}}%
    \@tempa{#2}{#3}{#5}{#6}{#7}{#8}%
    \fi
   \fi
  \fi
  \ifwasdraft\@msidraft=1\else\@msidraft=0\fi{}%
  \egroup
 }%
%

\def\TEXUX#1{"texux"}

%
%
%
%
%
%
%
%
%
%

%
\long\def\QQQ#1#2{%
     \long\expandafter\def\csname#1\endcsname{#2}}%
\@ifundefined{QTP}{\def\QTP#1{}}{}
\@ifundefined{QEXCLUDE}{\def\QEXCLUDE#1{}}{}
\@ifundefined{Qlb}{}{}
\@ifundefined{Qlt}{}{}
\long\def\QQA#1#2{}%
\def\QTR#1#2{{\csname#1\endcsname {#2}}}%
\def\EXPAND#1[#2]#3{}%
\def\NOEXPAND#1[#2]#3{}%
\def\LaTeXparent#1{}%
\def\ChildStyles#1{}%
\def\ChildDefaults#1{}%
\def\QTagDef#1#2#3{}%

\@ifundefined{correctchoice}{}{}
\@ifundefined{HTML}{\def\HTML#1{\relax}}{}
\@ifundefined{TCIIcon}{\def\TCIIcon#1#2#3#4{\relax}}{}
\if@compatibility
  \typeout{Not defining UNICODE  U or CustomNote commands for LaTeX 2.09.}
\else
  \providecommand{\UNICODE}[2][]{\protect\rule{.1in}{.1in}}
  \providecommand{\U}[1]{\protect\rule{.1in}{.1in}}
  
\fi

\@ifundefined{lambdabar}{
      
   }{}

%
\@ifundefined{StyleEditBeginDoc}{}{}
%
\def\QQfnmark#1{\footnotemark}

%
%
\@ifundefined{TCIMAKEINDEX}{}{\makeindex}%
%
\@ifundefined{abstract}{%
 \def\abstract{%
  \if@twocolumn
   \section*{Abstract (Not appropriate in this style!)}%
   \else \small 
   \begin{center}{\bf Abstract\vspace{-.5em}\vspace{\z@}}\end{center}%
   \quotation 
   \fi
  }%
 }{%
 }%
\@ifundefined{endabstract}{\def\endabstract
  {\if@twocolumn\else\endquotation\fi}}{}%
\@ifundefined{maketitle}{\def\maketitle#1{}}{}%
\@ifundefined{affiliation}{\def\affiliation#1{}}{}%
\@ifundefined{proof}{}{}%
\@ifundefined{endproof}{}{}%
\@ifundefined{newfield}{\def\newfield#1#2{}}{}%
\@ifundefined{chapter}{\def\chapter#1{\par(Chapter head:)#1\par }%
 \newcount\c@chapter}{}%
\@ifundefined{part}{\def\part#1{\par(Part head:)#1\par }}{}%
\@ifundefined{section}{\def\section#1{\par(Section head:)#1\par }}{}%
\@ifundefined{subsection}{\def\subsection#1%
 {\par(Subsection head:)#1\par }}{}%
\@ifundefined{subsubsection}{\def\subsubsection#1%
 {\par(Subsubsection head:)#1\par }}{}%
\@ifundefined{paragraph}{\def\paragraph#1%
 {\par(Subsubsubsection head:)#1\par }}{}%
\@ifundefined{subparagraph}{\def\subparagraph#1%
 {\par(Subsubsubsubsection head:)#1\par }}{}%
\@ifundefined{therefore}{}{}%
\@ifundefined{backepsilon}{}{}%
\@ifundefined{yen}{}{}%
\@ifundefined{registered}{%
   \def\registered{\relax\ifmmode{}\r@gistered
                    \else$\m@th\r@gistered$\fi}%
 \def\r@gistered{^{\ooalign
  {\hfil\raise.07ex\hbox{$\scriptstyle\rm\text{R}$}\hfil\crcr
  \mathhexbox20D}}}}{}%
\@ifundefined{Eth}{}{}%
\@ifundefined{eth}{}{}%
\@ifundefined{Thorn}{}{}%
\@ifundefined{thorn}{}{}%
%
\@ifundefined{degree}{}{}%
%
\newdimen\theight
\@ifundefined{Column}{\def\Column{%
 \vadjust{\setbox\z@=\hbox{\scriptsize\quad\quad tcol}%
  \theight=\ht\z@\advance\theight by \dp\z@\advance\theight by \lineskip
  \kern -\theight \vbox to \theight{%
   \rightline{\rlap{\box\z@}}%
   \vss
   }%
  }%
 }}{}%
\@ifundefined{qed}{\def\qed{%
 \ifhmode\unskip\nobreak\fi\ifmmode\ifinner\else\hskip5\p@\fi\fi
 \hbox{\hskip5\p@\vrule width4\p@ height6\p@ depth1.5\p@\hskip\p@}%
 }}{}%
\@ifundefined{cents}{}{}%
\@ifundefined{tciLaplace}{}{}%
\@ifundefined{tciFourier}{}{}%
\@ifundefined{textcurrency}{}{}%
\@ifundefined{texteuro}{}{}%
\@ifundefined{euro}{}{}%
\@ifundefined{textfranc}{}{}%
\@ifundefined{textlira}{}{}%
\@ifundefined{textpeseta}{}{}%
\@ifundefined{miss}{\def\miss{\hbox{\vrule height2\p@ width 2\p@ depth\z@}}}{}%
\@ifundefined{vvert}{}{}
\@ifundefined{tcol}{\def\tcol#1{{\baselineskip=6\p@ \vcenter{#1}} \Column}}{}%
\@ifundefined{dB}{}{}
\@ifundefined{mB}{}{}
\@ifundefined{nB}{}{}
\@ifundefined{note}{}{}%
\def\newfmtname{LaTeX2e}
%
\ifx\fmtname\newfmtname
  \DeclareOldFontCommand{\rm}{\normalfont\rmfamily}{\mathrm}
  \DeclareOldFontCommand{\sf}{\normalfont\sffamily}{\mathsf}
  \DeclareOldFontCommand{\tt}{\normalfont\ttfamily}{\mathtt}
  \DeclareOldFontCommand{\bf}{\normalfont\bfseries}{\mathbf}
  \DeclareOldFontCommand{\it}{\normalfont\itshape}{\mathit}
  \DeclareOldFontCommand{\sl}{\normalfont\slshape}{\@nomath\sl}
  \DeclareOldFontCommand{\sc}{\normalfont\scshape}{\@nomath\sc}
\fi

%

\def\alpha{{\Greekmath 010B}}%
\def\beta{{\Greekmath 010C}}%
\def\gamma{{\Greekmath 010D}}%
\def\delta{{\Greekmath 010E}}%
\def\epsilon{{\Greekmath 010F}}%
\def\zeta{{\Greekmath 0110}}%
\def\eta{{\Greekmath 0111}}%
\def\theta{{\Greekmath 0112}}%
\def\iota{{\Greekmath 0113}}%
\def\kappa{{\Greekmath 0114}}%
\def\lambda{{\Greekmath 0115}}%
\def\mu{{\Greekmath 0116}}%
\def\nu{{\Greekmath 0117}}%
\def\xi{{\Greekmath 0118}}%
\def\pi{{\Greekmath 0119}}%
\def\rho{{\Greekmath 011A}}%
\def\sigma{{\Greekmath 011B}}%
\def\tau{{\Greekmath 011C}}%
\def\upsilon{{\Greekmath 011D}}%
\def\phi{{\Greekmath 011E}}%
\def\chi{{\Greekmath 011F}}%
\def\psi{{\Greekmath 0120}}%
\def\omega{{\Greekmath 0121}}%
\def\varepsilon{{\Greekmath 0122}}%
\def\vartheta{{\Greekmath 0123}}%
\def\varpi{{\Greekmath 0124}}%
\def\varrho{{\Greekmath 0125}}%
\def\varsigma{{\Greekmath 0126}}%
\def\varphi{{\Greekmath 0127}}%

\def\nabla{{\Greekmath 0272}}
\def\FindBoldGroup{%
   {\setbox0=\hbox{$\mathbf{x\global\edef\theboldgroup{\the\mathgroup}}$}}%
}

\def\Greekmath#1#2#3#4{%
    \if@compatibility
        \ifnum\mathgroup=\symbold
           \mathchoice{\mbox{\boldmath$\displaystyle\mathchar"#1#2#3#4$}}%
                      {\mbox{\boldmath$\textstyle\mathchar"#1#2#3#4$}}%
                      {\mbox{\boldmath$\scriptstyle\mathchar"#1#2#3#4$}}%
                      {\mbox{\boldmath$\scriptscriptstyle\mathchar"#1#2#3#4$}}%
        \else
           \mathchar"#1#2#3#4%
        \fi 
    \else 
        \FindBoldGroup
        \ifnum\mathgroup=\theboldgroup 
           \mathchoice{\mbox{\boldmath$\displaystyle\mathchar"#1#2#3#4$}}%
                      {\mbox{\boldmath$\textstyle\mathchar"#1#2#3#4$}}%
                      {\mbox{\boldmath$\scriptstyle\mathchar"#1#2#3#4$}}%
                      {\mbox{\boldmath$\scriptscriptstyle\mathchar"#1#2#3#4$}}%
        \else
           \mathchar"#1#2#3#4%
        \fi     	    
	  \fi}

\newif\ifGreekBold  \GreekBoldfalse
\let\SAVEPBF=\pbf
\def\pbf{\GreekBoldtrue\SAVEPBF}%

\@ifundefined{theorem}{\newtheorem{theorem}{Theorem}}{}
\@ifundefined{lemma}{\newtheorem{lemma}[theorem]{Lemma}}{}
\@ifundefined{corollary}{}{}
\@ifundefined{conjecture}{}{}
\@ifundefined{proposition}{}{}
\@ifundefined{axiom}{}{}
\@ifundefined{remark}{}{}
\@ifundefined{example}{}{}
\@ifundefined{exercise}{}{}
\@ifundefined{definition}{}{}

\@ifundefined{mathletters}{%
  \newcounter{equationnumber}  
  \def\mathletters{%
     \addtocounter{equation}{1}
     \edef\@currentlabel{\theequation}%
     \setcounter{equationnumber}{\c@equation}
     \setcounter{equation}{0}%
     \edef\theequation{\@currentlabel\noexpand\alph{equation}}%
  }
  
}{}

\@ifundefined{BibTeX}{%
    \def\BibTeX{{\rm B\kern-.05em{\sc i\kern-.025em b}\kern-.08em
                 T\kern-.1667em\lower.7ex\hbox{E}\kern-.125emX}}}{}%
\@ifundefined{AmS}%
    {\def\AmS{{\protect\usefont{OMS}{cmsy}{m}{n}%
                A\kern-.1667em\lower.5ex\hbox{M}\kern-.125emS}}}{}%
\@ifundefined{AmSTeX}{}{}%
%

\def\@@eqncr{\let\@tempa\relax
    \ifcase\@eqcnt \def\@tempa{& & &}\or \def\@tempa{& &}%
      \else \def\@tempa{&}\fi
     \@tempa
     \if@eqnsw
        \iftag@
           \@taggnum
        \else
           \@eqnnum\stepcounter{equation}%
        \fi
     \fi
     \global\tag@false
     \global\@eqnswtrue
     \global\@eqcnt\z@\cr}

\def\TCItag{\@ifnextchar*{\@TCItagstar}{\@TCItag}}
\def\@TCItag#1{%
    \global\tag@true
    \global\def\@taggnum{(#1)}%
    \global\def\@currentlabel{#1}}
\def\@TCItagstar*#1{%
    \global\tag@true
    \global\def\@taggnum{#1}%
    \global\def\@currentlabel{#1}}
%
%
%
%
%
%
%
%
%
%
%
%
%
%
%
%
%
%
%

\def\tint{\msi@int\textstyle\int}%
\def\tiint{\msi@int\textstyle\iint}%
\def\tiiint{\msi@int\textstyle\iiint}%
\def\tiiiint{\msi@int\textstyle\iiiint}%
\def\tidotsint{\msi@int\textstyle\idotsint}%
\def\toint{\msi@int\textstyle\oint}%

%
%
%
%
%
%
%
%
%
%
%
%
%
%
%

\newtoks\temptoksa
\newtoks\temptoksb
\newtoks\temptoksc

\def\msi@int#1#2{%
 \def\@temp{{#1#2\the\temptoksc_{\the\temptoksa}^{\the\temptoksb}}}%
 \futurelet\@nextcs
 \@int
}

\def\@int{%
   \ifx\@nextcs\limits
      \typeout{Found limits}%
      \temptoksc={\limits}%
	  \let\@next\@intgobble%
   \else\ifx\@nextcs\nolimits
      \typeout{Found nolimits}%
      \temptoksc={\nolimits}%
	  \let\@next\@intgobble%
   \else
      \typeout{Did not find limits or no limits}%
      \temptoksc={}%
      \let\@next\msi@limits%
   \fi\fi
   \@next   
}%

\def\@intgobble#1{%
   \typeout{arg is #1}%
   \msi@limits
}

\def\msi@limits{%
   \temptoksa={}%
   \temptoksb={}%
   \@ifnextchar_{\@limitsa}{\@limitsb}%
}

\def\@limitsa_#1{%
   \temptoksa={#1}%
   \@ifnextchar^{\@limitsc}{\@temp}%
}

\def\@limitsb{%
   \@ifnextchar^{\@limitsc}{\@temp}%
}

\def\@limitsc^#1{%
   \temptoksb={#1}%
   \@ifnextchar_{\@limitsd}{\@temp}%
}

\def\@limitsd_#1{%
   \temptoksa={#1}%
   \@temp
}

\def\dint{\msi@int\displaystyle\int}%
\def\diint{\msi@int\displaystyle\iint}%
\def\diiint{\msi@int\displaystyle\iiint}%
\def\diiiint{\msi@int\displaystyle\iiiint}%
\def\didotsint{\msi@int\displaystyle\idotsint}%
\def\doint{\msi@int\displaystyle\oint}%

\if@compatibility\else
  \RequirePackage{amsmath}
\fi

\def\ExitTCILatex{\makeatother }

\bgroup
\ifx\ds@amstex\relax
   \message{amstex already loaded}\aftergroup\ExitTCILatex
\else
   \@ifpackageloaded{amsmath}%
      {\if@compatibility\message{amsmath already loaded}\fi\aftergroup\ExitTCILatex}
      {}
   \@ifpackageloaded{amstex}%
      {\if@compatibility\message{amstex already loaded}\fi\aftergroup\ExitTCILatex}
      {}
   \@ifpackageloaded{amsgen}%
      {\if@compatibility\message{amsgen already loaded}\fi\aftergroup\ExitTCILatex}
      {}
\fi
\egroup


\typeout{TCILATEX defining AMS-like constructs in LaTeX 2.09 COMPATIBILITY MODE}
%
%
\let\DOTSI\relax
\def\RIfM@{\relax\ifmmode}%
\def\FN@{\futurelet\next}%
\newcount\intno@
\def\iint{\DOTSI\intno@\tw@\FN@\ints@}%
\def\iiint{\DOTSI\intno@\thr@@\FN@\ints@}%
\def\iiiint{\DOTSI\intno@4 \FN@\ints@}%
\def\idotsint{\DOTSI\intno@\z@\FN@\ints@}%
\def\ints@{\findlimits@\ints@@}%
\newif\iflimtoken@
\newif\iflimits@
\def\findlimits@{\limtoken@true\ifx\next\limits\limits@true
 \else\ifx\next\nolimits\limits@false\else
 \limtoken@false\ifx\ilimits@\nolimits\limits@false\else
 \ifinner\limits@false\else\limits@true\fi\fi\fi\fi}%
\def\multint@{\int\ifnum\intno@=\z@\intdots@                          
 \else\intkern@\fi                                                    
 \ifnum\intno@>\tw@\int\intkern@\fi                                   
 \ifnum\intno@>\thr@@\int\intkern@\fi                                 
 \int}
\def\multintlimits@{\intop\ifnum\intno@=\z@\intdots@\else\intkern@\fi
 \ifnum\intno@>\tw@\intop\intkern@\fi
 \ifnum\intno@>\thr@@\intop\intkern@\fi\intop}%
\def\intic@{%
    \mathchoice{\hskip.5em}{\hskip.4em}{\hskip.4em}{\hskip.4em}}%
\def\negintic@{\mathchoice
 {\hskip-.5em}{\hskip-.4em}{\hskip-.4em}{\hskip-.4em}}%
\def\ints@@{\iflimtoken@                                              
 \def\ints@@@{\iflimits@\negintic@
   \mathop{\intic@\multintlimits@}\limits                             
  \else\multint@\nolimits\fi                                          
  \eat@}
 \else                                                                
 \def\ints@@@{\iflimits@\negintic@
  \mathop{\intic@\multintlimits@}\limits\else
  \multint@\nolimits\fi}\fi\ints@@@}%
\def\intkern@{\mathchoice{\!\!\!}{\!\!}{\!\!}{\!\!}}%
\def\plaincdots@{\mathinner{\cdotp\cdotp\cdotp}}%
\def\intdots@{\mathchoice{\plaincdots@}%
 {{\cdotp}\mkern1.5mu{\cdotp}\mkern1.5mu{\cdotp}}%
 {{\cdotp}\mkern1mu{\cdotp}\mkern1mu{\cdotp}}%
 {{\cdotp}\mkern1mu{\cdotp}\mkern1mu{\cdotp}}}%
%
%
%
\def\RIfM@{\relax\protect\ifmmode}
\def\text{\RIfM@\expandafter\text@\else\expandafter\mbox\fi}
\let\nfss@text\text
\def\text@#1{\mathchoice
   {\textdef@\displaystyle\f@size{#1}}%
   {\textdef@\textstyle\tf@size{\firstchoice@false #1}}%
   {\textdef@\textstyle\sf@size{\firstchoice@false #1}}%
   {\textdef@\textstyle \ssf@size{\firstchoice@false #1}}%
   \glb@settings}

\def\textdef@#1#2#3{\hbox{{%
                    \everymath{#1}%
                    \let\f@size#2\selectfont
                    #3}}}
\newif\iffirstchoice@
\firstchoice@true
%
%
\def\Let@{\relax\iffalse{\fi\let\\=\cr\iffalse}\fi}%
\def\vspace@{\def\vspace##1{\crcr\noalign{\vskip##1\relax}}}%
\def\multilimits@{\bgroup\vspace@\Let@
 \baselineskip\fontdimen10 \scriptfont\tw@
 \advance\baselineskip\fontdimen12 \scriptfont\tw@
 \lineskip\thr@@\fontdimen8 \scriptfont\thr@@
 \lineskiplimit\lineskip
 \vbox\bgroup\ialign\bgroup\hfil$\m@th\scriptstyle{##}$\hfil\crcr}%
\def\Sb{_\multilimits@}%
\def\endSb{\crcr\egroup\egroup\egroup}%
\def\Sp{^\multilimits@}%

%
%
%
\newdimen\ex@
\ex@.2326ex
\def\rightarrowfill@#1{$#1\m@th\mathord-\mkern-6mu\cleaders
 \hbox{$#1\mkern-2mu\mathord-\mkern-2mu$}\hfill
 \mkern-6mu\mathord\rightarrow$}%
\def\leftarrowfill@#1{$#1\m@th\mathord\leftarrow\mkern-6mu\cleaders
 \hbox{$#1\mkern-2mu\mathord-\mkern-2mu$}\hfill\mkern-6mu\mathord-$}%
\def\leftrightarrowfill@#1{$#1\m@th\mathord\leftarrow
\mkern-6mu\cleaders
 \hbox{$#1\mkern-2mu\mathord-\mkern-2mu$}\hfill
 \mkern-6mu\mathord\rightarrow$}%
\def\overrightarrow{\mathpalette\overrightarrow@}%
\def\overrightarrow@#1#2{\vbox{\ialign{##\crcr\rightarrowfill@#1\crcr
 \noalign{\kern-\ex@\nointerlineskip}$\m@th\hfil#1#2\hfil$\crcr}}}%

\def\overleftarrow{\mathpalette\overleftarrow@}%
\def\overleftarrow@#1#2{\vbox{\ialign{##\crcr\leftarrowfill@#1\crcr
 \noalign{\kern-\ex@\nointerlineskip}$\m@th\hfil#1#2\hfil$\crcr}}}%
\def\overleftrightarrow{\mathpalette\overleftrightarrow@}%
\def\overleftrightarrow@#1#2{\vbox{\ialign{##\crcr
   \leftrightarrowfill@#1\crcr
 \noalign{\kern-\ex@\nointerlineskip}$\m@th\hfil#1#2\hfil$\crcr}}}%
\def\underrightarrow{\mathpalette\underrightarrow@}%
\def\underrightarrow@#1#2{\vtop{\ialign{##\crcr$\m@th\hfil#1#2\hfil
  $\crcr\noalign{\nointerlineskip}\rightarrowfill@#1\crcr}}}%

\def\underleftarrow{\mathpalette\underleftarrow@}%
\def\underleftarrow@#1#2{\vtop{\ialign{##\crcr$\m@th\hfil#1#2\hfil
  $\crcr\noalign{\nointerlineskip}\leftarrowfill@#1\crcr}}}%
\def\underleftrightarrow{\mathpalette\underleftrightarrow@}%
\def\underleftrightarrow@#1#2{\vtop{\ialign{##\crcr$\m@th
  \hfil#1#2\hfil$\crcr
 \noalign{\nointerlineskip}\leftrightarrowfill@#1\crcr}}}%

\def\qopnamewl@#1{\mathop{\operator@font#1}\nlimits@}
\let\nlimits@\displaylimits
\def\setboxz@h{\setbox\z@\hbox}

\def\varlim@#1#2{\mathop{\vtop{\ialign{##\crcr
 \hfil$#1\m@th\operator@font lim$\hfil\crcr
 \noalign{\nointerlineskip}#2#1\crcr
 \noalign{\nointerlineskip\kern-\ex@}\crcr}}}}

 \def\rightarrowfill@#1{\m@th\setboxz@h{$#1-$}\ht\z@\z@
  $#1\copy\z@\mkern-6mu\cleaders
  \hbox{$#1\mkern-2mu\box\z@\mkern-2mu$}\hfill
  \mkern-6mu\mathord\rightarrow$}
\def\leftarrowfill@#1{\m@th\setboxz@h{$#1-$}\ht\z@\z@
  $#1\mathord\leftarrow\mkern-6mu\cleaders
  \hbox{$#1\mkern-2mu\copy\z@\mkern-2mu$}\hfill
  \mkern-6mu\box\z@$}

\def\projlim{\qopnamewl@{proj\,lim}}
\def\injlim{\qopnamewl@{inj\,lim}}
\def\varinjlim{\mathpalette\varlim@\rightarrowfill@}
\def\varprojlim{\mathpalette\varlim@\leftarrowfill@}
\def\varliminf{\mathpalette\varliminf@{}}
\def\varliminf@#1{\mathop{\underline{\vrule\@depth.2\ex@\@width\z@
   \hbox{$#1\m@th\operator@font lim$}}}}
\def\varlimsup{\mathpalette\varlimsup@{}}
\def\varlimsup@#1{\mathop{\overline
  {\hbox{$#1\m@th\operator@font lim$}}}}

%
%
%
%
%
%
\begingroup \catcode `|=0 \catcode `[= 1
\catcode`]=2 \catcode `\{=12 \catcode `\}=12
\catcode`\\=12 
|gdef|@alignverbatim#1\end{align}[#1|end[align]]
|gdef|@salignverbatim#1\end{align*}[#1|end[align*]]

|gdef|@alignatverbatim#1\end{alignat}[#1|end[alignat]]
|gdef|@salignatverbatim#1\end{alignat*}[#1|end[alignat*]]

|gdef|@xalignatverbatim#1\end{xalignat}[#1|end[xalignat]]
|gdef|@sxalignatverbatim#1\end{xalignat*}[#1|end[xalignat*]]

|gdef|@gatherverbatim#1\end{gather}[#1|end[gather]]
|gdef|@sgatherverbatim#1\end{gather*}[#1|end[gather*]]

|gdef|@gatherverbatim#1\end{gather}[#1|end[gather]]
|gdef|@sgatherverbatim#1\end{gather*}[#1|end[gather*]]

|gdef|@multilineverbatim#1\end{multiline}[#1|end[multiline]]
|gdef|@smultilineverbatim#1\end{multiline*}[#1|end[multiline*]]

|gdef|@arraxverbatim#1\end{arrax}[#1|end[arrax]]
|gdef|@sarraxverbatim#1\end{arrax*}[#1|end[arrax*]]

|gdef|@tabulaxverbatim#1\end{tabulax}[#1|end[tabulax]]
|gdef|@stabulaxverbatim#1\end{tabulax*}[#1|end[tabulax*]]

|endgroup

\def\align{\@verbatim \frenchspacing\@vobeyspaces \@alignverbatim
You are using the "align" environment in a style in which it is not defined.}

\@namedef{align*}{\@verbatim\@salignverbatim
You are using the "align*" environment in a style in which it is not defined.}
\expandafter\let\csname endalign*\endcsname =\endtrivlist

\def\alignat{\@verbatim \frenchspacing\@vobeyspaces \@alignatverbatim
You are using the "alignat" environment in a style in which it is not defined.}

\@namedef{alignat*}{\@verbatim\@salignatverbatim
You are using the "alignat*" environment in a style in which it is not defined.}
\expandafter\let\csname endalignat*\endcsname =\endtrivlist

\def\xalignat{\@verbatim \frenchspacing\@vobeyspaces \@xalignatverbatim
You are using the "xalignat" environment in a style in which it is not defined.}

\@namedef{xalignat*}{\@verbatim\@sxalignatverbatim
You are using the "xalignat*" environment in a style in which it is not defined.}
\expandafter\let\csname endxalignat*\endcsname =\endtrivlist

\def\gather{\@verbatim \frenchspacing\@vobeyspaces \@gatherverbatim
You are using the "gather" environment in a style in which it is not defined.}

\@namedef{gather*}{\@verbatim\@sgatherverbatim
You are using the "gather*" environment in a style in which it is not defined.}
\expandafter\let\csname endgather*\endcsname =\endtrivlist

\def\multiline{\@verbatim \frenchspacing\@vobeyspaces \@multilineverbatim
You are using the "multiline" environment in a style in which it is not defined.}

\@namedef{multiline*}{\@verbatim\@smultilineverbatim
You are using the "multiline*" environment in a style in which it is not defined.}
\expandafter\let\csname endmultiline*\endcsname =\endtrivlist

\def\arrax{\@verbatim \frenchspacing\@vobeyspaces \@arraxverbatim
You are using a type of "array" construct that is only allowed in AmS-LaTeX.}

\def\tabulax{\@verbatim \frenchspacing\@vobeyspaces \@tabulaxverbatim
You are using a type of "tabular" construct that is only allowed in AmS-LaTeX.}

\@namedef{arrax*}{\@verbatim\@sarraxverbatim
You are using a type of "array*" construct that is only allowed in AmS-LaTeX.}
\expandafter\let\csname endarrax*\endcsname =\endtrivlist

\@namedef{tabulax*}{\@verbatim\@stabulaxverbatim
You are using a type of "tabular*" construct that is only allowed in AmS-LaTeX.}
\expandafter\let\csname endtabulax*\endcsname =\endtrivlist


 \def\endequation{%
     \ifmmode\ifinner 
      \iftag@
        \addtocounter{equation}{-1} 
        $\hfil
           \displaywidth\linewidth\@taggnum\egroup \endtrivlist
        \global\tag@false
        \global\@ignoretrue   
      \else
        $\hfil
           \displaywidth\linewidth\@eqnnum\egroup \endtrivlist
        \global\tag@false
        \global\@ignoretrue 
      \fi
     \else   
      \iftag@
        \addtocounter{equation}{-1} 
        \eqno \hbox{\@taggnum}
        \global\tag@false%
        $$\global\@ignoretrue
      \else
        \eqno \hbox{\@eqnnum}
        $$\global\@ignoretrue
      \fi
     \fi\fi
 } 

 \newif\iftag@ \tag@false
 
 \def\TCItag{\@ifnextchar*{\@TCItagstar}{\@TCItag}}
 \def\@TCItag#1{%
     \global\tag@true
     \global\def\@taggnum{(#1)}%
     \global\def\@currentlabel{#1}}
 \def\@TCItagstar*#1{%
     \global\tag@true
     \global\def\@taggnum{#1}%
     \global\def\@currentlabel{#1}}

  \@ifundefined{tag}{
     \def\tag{\@ifnextchar*{\@tagstar}{\@tag}}
     \def\@tag#1{%
         \global\tag@true
         \global\def\@taggnum{(#1)}}
     \def\@tagstar*#1{%
         \global\tag@true
         \global\def\@taggnum{#1}}
  }{}

%
%
%
%
%

\makeatother

\graphicspath{{figures/}}

\begin{document}

\title{Deep Convolutional Tables:\\
Deep Learning without Convolutions}
\author{Shay Dekel, Bar Ilan University Ramat-Gan, Israel \\
Yosi Keller, Bar Ilan University Ramat-Gan, Israel\\
Aharon Bar-Hillel Ben-Gurion University, Negev, Israel \thanks{%
All authors contributed equally.} }
\maketitle

\begin{abstract}
We propose a novel formulation of deep networks that do not use dot-product
neurons and rely on a hierarchy of voting tables instead, denoted as
Convolutional Tables (CT), to enable accelerated CPU-based inference.
Convolutional layers are the most time-consuming bottleneck in contemporary
deep learning techniques, severely limiting their use in Internet of Things
and CPU-based devices. The proposed CT performs a fern operation at each
image location: it encodes the location environment into a binary index and
uses the index to retrieve the desired local output from a table. The
results of multiple tables are combined to derive the final output. The
computational complexity of a CT transformation is independent of the patch
(filter) size and grows gracefully with the number of channels,
outperforming comparable convolutional layers. It is shown to have a better
capacity:compute ratio than dot-product neurons, and that deep CT networks
exhibit a universal approximation property similar to neural networks. As
the transformation involves computing discrete indices, we derive a soft
relaxation and gradient-based approach for training the CT hierarchy. Deep
CT networks have been experimentally shown to have accuracy comparable to
that of CNNs of similar architectures. In the low compute regime, they
enable an error:speed trade-off superior to alternative efficient CNN
architectures.
\end{abstract}

\markboth{Journal of \LaTeX\ Class Files,~Vol.~14, No.~8, August~2015}{Shell
\MakeLowercase{\textit{et al.}}: Bare Demo of IEEEtran.cls for Computer
Society Journals}%
\IEEEtitleabstractindextext{

\begin{IEEEkeywords}
Deep Learning, Efficient Computation, Convolutional Tables
\end{IEEEkeywords}}

\IEEEdisplaynontitleabstractindextext
\IEEEpeerreviewmaketitle

\section{Introduction}

\label{sec:introduction}

Deep learning techniques in general, and convolutional neural networks
(CNNs) in particular, have become the core computational approach in
computer vision. Currently, `deep learning techniques' and `deep neural
networks' are synonyms, despite relating to different concepts. By
definition~\cite{Deng2014}, deep learning is the \textquotedblleft use of
multiple layers to progressively extract higher-level features from raw
input,\textquotedblright\ i.e., learning a useful feature hierarchy.
Artificial neural networks, on the other hand, are graphs of
dot-products/convolutions computing elements. In this work, we argue that it
is possible to unravel this dualism, and that this is a worthwhile endeavor.

It is the need for accelerated and more efficient CPU-based inference that
has motivated us to depart from the dot-product/convolution-based paradigm.
CNNs consist of millions of `neurons': computing elements applying dot
products to their inputs. The resulting computational load is significant
and is usually handled by GPUs and other specialized hardware. Due to these
computational demands, deep learning applications are often restricted to
the server-side, where powerful dedicated hardware carries the computational
load. Alternatively, applications running on standard CPUs or low-computing
platforms such as cellphones are limited to small, low-accuracy networks, or
avoid using networks altogether. Such applications are natural user
interfaces (NUIs), face recognition, mobile robotics, and Internet of Things
(IoT) devices, to name a few. Low-compute platforms are not equipped with a
GPU and are expected to handle the inference of simple deep learning tasks.
Such devices are unable to train CNNs or run ImageNet-scale CNNs. Other
devices of interest, such as most laptops and desktop computers, are
unequipped with GPUs but have gigabytes of memory.

In this work, we propose a novel deep learning paradigm based on a hierarchy
of deep voting tables. This allows the embedding of input signals such as
images, similarly to backbone CNNs, which is shown to be preferable in
low-compute and GPU-less domains. Our approach uses CPUs that are capable of
efficiently accessing large memory domains (random access). For instance,
for a CPU with ideally unbounded memory access, the fastest classifier is a
single huge table: a series of simple binary queries is applied to the
input, an index is built from the resulting bits, and an answer is retrieved
from the table entry. Such an approach is impractical due to the huge size
of the resulting table. A practical alternative is to replace the single
table with an ensemble of smaller hierarchical tables. Rather than rely on
dot products, we propose a two-step, computationally efficient alternative.
First, a set of $K$ simple binary queries is applied to the input patch,
thus encoding it with a binary codeword of $K$-bit. Second, the computed
codeword is used as an index into a table to retrieve the corresponding
output representation. The proposed transformation is called a Convolutional
Table (CT). A CT layer consists of multiple CTs whose outputs are summed.
Similarly to convolutional layers, CT layers are stacked to form a CT
network and derive a hierarchy of features. We detail this construction in
Section~\ref{sec:Model}.

A fern is a tree in which all split criteria at a fixed tree depth are
identical, such that the leaf identity is determined by using fixed $K$
split criteria termed 'bit-functions'. The CT operation is the application
of a single fern to a single input location, and the operation of a CT layer
(sum of CTs), corresponds to applying a fern ensemble. In Section~\ref%
{sec:complexity}, we compare the computational complexity of the CNN and CT
operations. By design, the complexity of a single CT layer is independent of
the size of the patch used (`filter size' in a convolutional layer), and its
dependence on quadratic depth terms is with significantly lower constants.
This allows us to derive conditions for which a CT network can be
considerably faster than a corresponding CNN.

In Section~\ref{sec:Theory}, we derive rigorous results regarding the
capacity of a single fern and the expressiveness of a two-layer CT network.
Specifically, the VC dimension of a single fern with $K$ bits is shown to be
$\Theta (2^{K})$. This implies that a fern has an advantage over a dot
product operation: it has a significantly higher
capacity-to-computing-effort ratio. Our results show that a two-layered CT
network has the same universal approximation capabilities as a two-layered
neural network, and that this also holds for ferns with a single
bit-function per fern.

A core challenge of the proposed deep-table approach is the need for a
proper training process. The challenge arises because inference requires
computing discrete indices that are unnameable for gradient-based learning.
Thus, we introduce in Section~\ref{sec:Optimization} a `soft formulation of the
calculation of the $K$ bit codeword', as a Cartesian product of the $K$ soft bit
functions. Following \cite{Krupka2014,Shotton2011}, bit-functions are
computed by comparing only two pixels in the region of interest. However, in
their soft version, the pixel indices parameters are non-integer, and their
gradient-based optimization relies on the horizontal and vertical spatial
gradients of the input maps. During training, soft indices are used for
table voting using a weighted combination of possible votes. We use an
annealing mechanism and an incremental learning schedule to gradually turn
trainable but inefficient soft voting into efficient, hard single-word
voting at the inference time.

We developed a CPU-based implementation of the proposed scheme and applied
it to multiple standard data-sets, consisting of up to 250K images. The
results, presented in Section~\ref{sec:Experiments}, show that CT-networks
achieve an accuracy comparable to CNNs with similar structural parameters
while outperforming the corresponding CNNs trained using weights and
activations binarization. More importantly, we consider the speed:error and
speed:error:memory trade-offs, where speed is estimated via operation count.
CT networks have been shown to provide better speed:error trade-off than
efficient CNN architectures such as MobileNet~\cite{sandler2018mobilenetv2}
and ShuffleNet~\cite{ma2018shufflenet} in the tested domain. The improved
speed:error trade-off requires a higher memory consumption, as CT networks
essentially trade speed for memory. Overall, CT networks can provide $%
3.1-12X $ acceleration over CNNs with memory expansions of $2.3X-10.6X$,
which are applicable for laptops and IoT applications.

In summary, our contribution in this paper is two-fold: First, we present an
alternative to convolution-based networks and show for the first time that
useful deep learning (i.e., learning of a feature hierarchy) can be achieved
by other means. \ Second, from an applicable viewpoint, we suggest a
framework enabling accelerated CPU inference for low-compute domains, with
moderate costs of accuracy degradation and memory consumption.

\section{Related Work}

\label{sec:Related work}

Computationally efficient CNN schemes were extensively studied using a
plethora of approaches, such as low-rank or tensor decomposition~\cite%
{Jaderberg2014}, weight quantization~\cite%
{AlHami2018TowardsAS,Krishnamoorthi2018QuantizingDC}, conditional
computation ~\cite{Davis2013LowRankAF,Bengio2015ConditionalCI,Waissman2018},
using FFT for efficient convolution computations~\cite{Mathieu2014}, or
pruning of networks weights and filters~\cite%
{NIPS1989_250,han2015deep_compression,HashedNets,7472157}. In the following, we focus on the methods most related to our work.

Binarization schemes aim to optimize CNNs by binarizing network weights \cite%
{Martinez2020Training,Bi_Real_Net,BinaryConnect} and internal network
activations \cite{XNOR-Net,LinZP17}. Recent approaches apply
weights-activations-based binarization \cite{XNOR_plus}, to reduce memory
footprint and computational complexity by utilizing only binary operations.
Methods which binarize the activations are related to the proposed scheme,
as they encode each local environment into a binary vector. However, their
encoding is dense, and the output is computed using a (binary) dot product.
The proposed scheme also utilizes binary activations, but differs
significantly, as convolution/dot product operations are not applied. In
contrast, we apply binary comparisons between random samples in the
activation maps.

Knowledge distillation approaches~\cite{Hinton2015,Yim_2017_CVPR} use a
large \textquotedblleft Teacher\textquotedblright\ CNN \ to train a smaller
computationally efficient \textquotedblleft Student\textquotedblright\ CNN.
We show in our experiments that this technique can be combined with our
approach to improve the accuracy. Regarding network design, several
architectures were suggested for low-compute platforms~\cite%
{sandler2018mobilenetv2,ma2018shufflenet,wu2018shift, ephrath2019leanresnet}%
, utilizing group convolutions~\cite{sandler2018mobilenetv2}, efficient
depth-wise convolutions~\cite{sandler2018mobilenetv2,ma2018shufflenet}, map
shifts~\cite{wu2018shift}, dubbed Bi-Real net \cite{Bi_Real_Net} and sparse
convolutions~\cite{ephrath2019leanresnet}. Differentiable soft quantization
was proposed by Gong et al. \cite{DSQ_method} to quantize the weights of a
CNN to a given number of bits. A paper by Huang et al. \cite%
{EfficientQuantization} proposed a method for efficient quantization using a
multilevel binarization technique and a linear or logarithmic
quantizer to simultaneously binarize weights and quantize
activations to a low bit width. The authors of the paper then compare the
trade-offs of this method to low-compute versions of MobileNetV2 \cite%
{sandler2018mobilenetv2}, ShuffleNet \cite{ma2018shufflenet} and Bi-Real Net
\cite{Bi_Real_Net}. The use of trees or ferns applied using convolutions for
fast predictors was thoroughly studied in computer vision \cite{Shotton2011,
Krupka2014,Ren2015,Krupka2017}. A convolutional random forest enabled
the estimation of human pose in real time in the Kinect console~\cite{Shotton2011}. A
flat CT ensemble classifier for hand poses, that was proposed in~\cite%
{Krupka2014}, enabled inference in less than $1$millisecond on a CPU. It was
extended \cite{Krupka2017} to a full-hand pose estimation system. However,
in all these works, flat predictors were used, while we extend the notion to
a deep network with gradient-based training. Other works merged tree
ensembles with MLPs or CNNs for classification~\cite%
{Kontschieder2015,Ioannou2016}, or semantic segmentation~\cite{Bulo2014}.
Specifically, `conditional networks' were presented in~\cite{Ioannou2016} as
CNNs with a tree structure, enabling conditional computation to improve the
speed-accuracy ratio. Unlike our approach, where ferns replace the
convolutions, these networks use standard convolutions and layers, and the
tree/forest is a high-level routing mechanism. Yang et al. proposed a
probability distribution approach for quantization \cite{SLB} by treating
the discrete weights in a quantized neural network as searchable variables
and using a differential method to search for them accurately. They
represent each weight as a probability distribution over the set of discrete values. While this method was able to produce quantized neural networks with
higher performance than other state-of-the-art quantized\textbf{\ }methods
in image classification, it is still reliant on standard convolution layers
with dot products, which could be a bottleneck in low-power systems. In the
experimental results section, we compare our method with that of the authors and report superior results in terms of accuracy. An
effective channel pruning method was suggested by Xiu Su et al \cite%
{SearchingBilaterally}, using a bilaterally coupled network to determine the
optimal width of a neural network, resulting in a more compact network. A
bilaterally coupled network is a network that has been trained to
approximate the optimal width of another network, to achieve a more compact
network while maintaining or improving its performance. A new approach for
dynamically removing redundant filters from a neural network was proposed by
Yehui Tang et al. ~\cite{ManifoldRegularized}. The channels are pruned by
embedding the manifold information of all instances into the space of pruned
networks. The performance of deep convolutional neural networks is improves
by removing redundant filters while preserving the important features of the
input data. The authors show that their method is an alternative to traditional
channel pruning methods, which may not be able to achieve optimal results in
terms of both accuracy and efficiency. A scheme related to our approach was
proposed by Zhou et al.~\cite{Zhou2017}, wherein the authors suggest a
cascade of forests applied using convolutions. Each forest is trained to solve
the classification task, and the class scores of a lower layer forest are
used as features for the next cascade level. While the classifiers'
structure is deep, they are not trained end-to-end, the trees are trained
independently using conventional random forest optimization. Hence, the
classifier uses thousands of trees and is only able to reach the accuracy of
a two-layer CNN.

\begin{figure}[t]
\includegraphics[width=0.95
\linewidth]{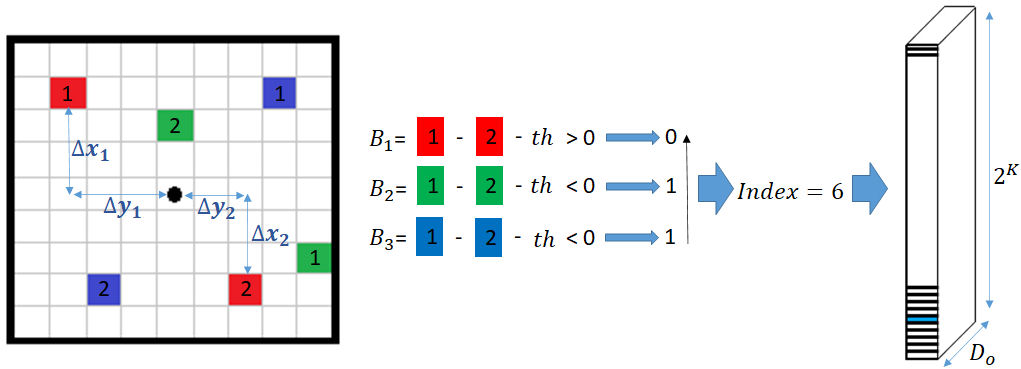}
\caption{A Convolutional Table (CT) with a word calculator of three bits
operating on a 2D input tensor. Each bit-function is given by two pixels
whose difference is compared to a threshold (colored in red, green, and
blue). The word computed by the three bits, '110' = 6, is used to access the
$6^{th}$ entree in the voting table, thus computing an output vector $\in
\mathbb{R}
^{D_{o}}$.}
\label{fig:bit_fun}
\end{figure}

\section{Convolutional Table Transform}

\label{sec:Model}

A convolutional table is a transformation accepting a representation tensor $%
T^{in}\in \mathbb{R}^{H_{i}\times W_{i}\times D_{i}}$ and outputting a
representation tensor $T^{out}\in \mathbb{R}^{H_{o}\times W_{o}\times D_{o}}$%
, where $H_{i},H_{i}$ and $W_{i},W_{0}$ are the spatial dimensions, while $%
D_{i}$, $D_{o}$ are the representation's dimensionality for the input and
output tensors, respectively. Formally, it is the tuple $(B,W)$, where $B $
is a word calculator, and $W$ is a voting table. A word calculator $B$ is a
feature extractor applied to a patch, returning a $K$-bit index, such that $%
B:\mathbb{R}^{l\times l\times D_{i}}\rightarrow \{0,1\}^{K}$, where $l$ is
the patch size. The voting table $W\in M^{_{2^{K}\times D_{o}}}$ contains
all possible $2^{K}$ outputs as rows, each being a vector $\in
\mathbb{R}
^{D_{o}}$.

The index is computed by sequentially applying $K$ bit-functions $%
\{B^{k}\}_{k=1}^{K}$, each computing a single bit of the index. Let $%
P=(p_{x},p_{y})\in \mathbb{R}^{H_{i}\times W_{i}}$ be a location in $T^{in}$%
. We denote by $B(P;\Theta )=(B^{1}(P;\Theta ^{1}),...,B^{K}(P,\Theta ^{K}))$
the index computed for the patch centered at location $P$. Here $\Theta
=(\Theta ^{1},...,\Theta ^{K})$ are the word calculator parameters, with $%
\Theta ^{k}$ explicitly characterized below. The functions $B^{k}(P;\Theta
^{k})\in \{0,1\}$ (dependence on $T^{in}$ is omitted for notation brevity)
are computed by thresholding at $0$ the simple smooth functions $%
b^{k}(P;\Theta ^{k})$ of a few input pixels. (i.e., $B^{k}(P;\Theta
)=Q\left( b^{k}(P;\Theta ^{k})\right) $ with $Q(\cdot )$ being the Heaviside
function). Bit functions $b^{k}$ are the thresholded differences between
two locations:
\begin{eqnarray}
b^{k}\left( P;\Theta _{k}\right)  &=&T^{in}\left( p_{x}+\Delta
x_{1}^{k},p_{y}+\Delta y_{1}^{k},c^{k}\right)   \label{eq:BitFunctions} \\
&&-T^{in}\left( p_{x}+\Delta x_{2}^{k},p_{y}+\Delta y_{2}^{k},c^{k}\right)
-th^{k},  \notag
\end{eqnarray}%
where the learned parameters $\Theta ^{k}$\textbf{, }define the compared
locations $(\Delta x_{1}^{k},\Delta y_{1}^{k},c^{k})$\textbf{\ }and $(\Delta
x_{2}^{k},\Delta y_{2}^{k},c^{k})$, and the margin threshold $th^{k}$. $th^{k}$ is the learnable scalar margin threshold for comparing
two locations within the patch corresponding to the $k$'th bit. The codeword
computed by the word calculator is used as an index to the voting table to
retrieve $W(B(P;\Theta ),:)\in
\mathbb{R}
^{D_{o}}$ for location $P$. Figure~\ref{fig:bit_fun} depicts the
convolutional table operation at a single location. A CT (using padding=
`same' and stride=$1$) applies the operation defined above at each position $%
P\in \mathbb{R}^{H_{i}\times W_{i}}$ of the input tensor to obtain the
corresponding output tensor column. Same as in convolutional layers, padding
can be applied to the input for computations at border pixels, and strides
may be used to reduce the spatial dimensions of the output tensor. A CT
layer includes $M$ convolutional tables $\{(B_{m},W_{m})\}_{m=1}^{M}$ whose
outputs are summed:
\begin{eqnarray}
T^{out}[P, &:&]=\sum_{m=1}^{M}W_{m}[B_{m}(T_{N(P)}^{in}),~:~],
\label{eq:Tout} \\
B_{m} &=&\{B^{k}\}_{k=1}^{K}  \notag
\end{eqnarray}

The operation of a CT layer is summarized in Algorithm \ref{alg:Deep_CT}.
Essentially, it applies a fern ensemble at each location, summing the output
vectors from the ensemble's ferns. A CT network stacks multiple CT layers in
a cascade or graph structure, similar to stacking convolutional layers in
standard CNNs.

\begin{algorithm}[t]
\caption{Convolutional Table Layer}
\begin{algorithmic}[1]
\State \textbf{INPUT:} \par {A tensor $T^{in} \in \mathbb{R}^{ H_i\times W_i \times D_i}$,\par transformation parameters $\{(\Theta^{m},W^{m})\}_{m=1}^{M}$}
\State  \textbf{OUTPUT:} \par{ An Output tensor $T^{out} \in \mathbb{R}^{ H_o\times W_o \times D_o}$}
\State Initialization: \par $T^{out}= 0$, \par $\{(\Theta^{m}, W^{m})\}_{m=1}^{M} \sim \mathcal{N}(0,\sigma^{2}=1)$
\For{$ P  \in \{1,..,H_i\}\times\{1,..,W_i\}$ } \Comment{ All pixel locations}
\For{$m=1,..,M$}    \Comment{ All CT tables}
\State{Compute $B_m(P;\Theta^{k,m} ) \in {\{0,1\}}^K$ } \Comment{Eq.~\ref{eq:BitFunctions}}
\State{$T^{out}[P,:]=T^{out}[P,:]+W^{m}[B_m(P;\Theta^{k,m} ),:]$} \Comment{Eq.~\ref{eq:Tout}}
\EndFor
\EndFor
\State{ Return $T^{out}$ }
\end{algorithmic}
\label{alg:Deep_CT}
\end{algorithm}

\section{Computational complexity}

\label{sec:complexity}

Let the input and output tensors be $T^{in}\in \mathbb{R}^{H_{i}\times
W_{i}\times D_{i}}$ and $T^{out}\in \mathbb{R}^{H_{o}\times W_{o}\times
D_{o}}$, respectively, where the dimensions are the same as in Section \ref%
{sec:Model}, and let $l\times l$ be the filter's support. For a convolution
layer, the number of operations per location is $C_{CNN}=l^{2}D_{i}D_{o}$,
since it computes $D_{o}$ inner products at the cost of $l^{2}D_{i}$ each.
For a CT layer with $M$ CTs and $K$ bits in each CT, the cost of bits
computation is $MKC_{b}$, with $C_{b}$ being the cost of computing a single
bit-function. The voting cost is $MD_{o}$, since we add the vectors $\in
\mathbb{R}^{MD_{o}}$ to get the result. Thus, the complexity of a CT layer
is $C_{CT\;}=M(C_{b}\cdot K+D_{o})$.

It follows that the computational complexity of a CT is independent of the
filter size $l$. Hence, it allows agglomerating evidence from large spatial
regions with no additional computational cost. To compare CNN and CT
complexities, assume that both use the same representation dimension $%
D_{i}=D_{o}=D$. The relation between the total number of bit-functions $MK$
and $D$ depends on whether each bit-function uses a separate dimension, or
the dimensions are reused by multiple bit-functions. For a general analysis,
assume each dimension is reused by $R$ bit-functions, such that $D=MK/R$ (in
our experiments $R\in \lbrack 1,3]$). Thus, the ratio of complexities
between CNNs and CT layers with the same representation dimension $D$ is
\begin{equation}
\frac{C_{CNN}}{C_{CT\;}}=\frac{l^{2}D^{2}}{M(C_{b}\cdot K+D)}=\frac{%
l^{2}D^{2}}{C_{b}DR+\frac{D^{2}R}{K}}\approx \frac{Kl^{2}}{R},
\label{equ:complexity2}
\end{equation}%
\textbf{\ }where the last approximation is true for large $D$, and the
quadratic terms in $D$ are dominant. The computation cost of computing a
single bit function, $C_{b}$, is defined as the number of operations
required to perform a single comparison operation between two locations
within the patch size. This computation is described in equation \ref%
{eq:BitFunctions} and involves four additions of the offsets of $\Delta _{x}$%
, $\Delta {y}$, and a subtraction of the threshold, $th^{k}$. This results
in a total of five operations. Additionally, load operations are required
for the two relevant memory locations and their two base addresses, bringing
the total number of operations to nine. Although the most stringent count does
not exceed 10 operations, for simplicity, we set $C{b}$ to 10. The ratio of
complexities between CNNs and CT layers in Eq. \ref{equ:complexity2}
suggests there is the potential for acceleration by more than an order of
magnitude for reasonable choices of parameter values. For example, using $l=3
$, $K=8$, and $R=2$ results in a $36$ $\times$ acceleration. Although the
result of this analysis is promising, several lower-level considerations
are essential to achieve actual acceleration. First, operation counts
translate into actual improved speed only if the operations can be
efficiently parallelized and vectorized. In particular, vectorization
requires contiguous memory access in most computations. The CT
transformation adheres to these constraints. Contrary to trees, bit
computations in ferns can be vectorized over adjacent input locations as all
locations use the same bit-functions, and the memory access is contiguous.
The voting operation can be vectorized over the output dimensions. This
implementation was already built and has been shown to be highly efficient for flat
CT classifiers \cite{Krupka2017}. Although a CT transformation uses a
`random memory access' operation when the table is accessed, this random
access is rare: there is a single random access operation included in the $%
(C_{b}\cdot K+D_{o})$ operations required for a single fern computation.
Another important issue is the need to keep the model's total storage size
within reasonable bounds to enable its memory to be efficiently accessed in
a typical $L_{2}$ cache. For example, a model consisting of $50$ layers,
wherein $M=10$ ferns, $K=6,$ and $D=60$, have a total size of $1.92MB$ when
the parameters are stored in $8$-bit precision.

\section{CT capacity and expressiveness}

\label{sec:Theory}

In this section, we present two analytic insights that indicate the feasibility
of deep CT networks and their potential advantages. In Section~\ref%
{subsec:Capacity} the capacity of a fern is shown to be $\Omega (2^{K})$
with $O(K)$ computational effort, providing a much better capacity:compute
ratio than a linear convolution. In Section~\ref{subsec:Universality} a
network of two fern layers is shown to be a universal approximation,
similarly to known results on neural networks. In the analysis, we consider
a simplified non-convolutional setting, with an input vector $X\in \lbrack
0,1]^{d}$, and a non-convolutional fern ensemble classifier. Furthermore,
simple bit-functions of the form $B^{k}(X)=Q\left(
(-1)^{s^{k}}(X[I^{k}]-t^{k})\right) $ are considered, where $s^{k}\in
\{0,1\},~I^{k}\in {1,..,d}~,t^{k}\in \lbrack 0,1]$. These simple
bit-functions just compare a single entry to a threshold.

\subsection{Capacity:Compute ratio}

\label{subsec:Capacity}

The following lemma characterizes the capacity of a single fern in
VC-dimension terms.

\begin{lemma}
\label{lemmaVC} Define the hypothesis family of binary classifiers based on
a single $K-bit$ fern $H=\{f(X)=sign(W(B(X))):B={\{(s^{k},I^{k},t^{k})\}}%
_{k=1}^{K},W\in R^{2^{K}}\}$. For $K\geq 4$ and $log_{2}(d)<K\leq d$ its
VC-dimension is
\begin{equation}
2^{K}\leq VC-dim(H)\leq K2^{K}
\end{equation}
\end{lemma}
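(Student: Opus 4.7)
The plan is to prove the two bounds separately.

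For the lower bound $2^K \leq VC\text{-}dim(H)$, I exhibit an explicit shattered set of size $2^K$. Since $K \leq d$, assign the $k$-th bit-function to compare coordinate $k$ to the threshold $t^{k}=1/2$ with sign $s^{k}=0$. For each binary word $b \in \{0,1\}^{K}$, define the point $X_{b}\in [0,1]^{d}$ whose first $K$ coordinates equal $b$ (the remaining coordinates are arbitrary). Under this fixed word calculator $B$ each $X_{b}$ is sent to the distinct index $b$, so the $2^{K}$ points realize all $2^{K}$ cells. Given any target labeling $y \in \{-1,+1\}^{2^{K}}$, setting $W[b]=y_{b}$ yields $\mathrm{sign}(W(B(X_{b})))=y_{b}$, so the set is shattered.

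For the upper bound I bound the growth function $\Pi_{H}(n)$. Fix $n$ points. A single bit-function $B^{k}$ on these points is determined by a coordinate choice ($d$ options), a threshold position ($n+1$ gaps among the sorted values on the chosen coordinate), and a sign ($2$ options), giving at most $2d(n+1)$ distinct behaviors on the $n$ points. Hence the word calculator $B=(B^{1},\dots,B^{K})$ induces at most $(2d(n+1))^{K}$ distinct partitions of the $n$ points into at most $2^{K}$ cells, and each such partition admits at most $2^{2^{K}}$ distinct sign patterns as $W \in \mathbb{R}^{2^{K}}$ varies. Multiplying,
\begin{equation}
\Pi_{H}(n) \;\leq\; (2d(n+1))^{K} \cdot 2^{2^{K}}.
\end{equation}
A shattered set of size $n$ forces $\Pi_{H}(n) \geq 2^{n}$, so it suffices to show the right-hand side is strictly below $2^{n}$ once $n > K 2^{K}$. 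Taking $\log_{2}$ and using the hypothesis $\log_{2} d < K$, this reduces to verifying $(K-1) 2^{K} + 1 > K(K+1+\log_{2}(K 2^{K}+2))$ at $n = K 2^{K}+1$.

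The main obstacle is the tightness of this last numerical inequality: the target $K 2^{K}$ is only a factor $K$ above the lower bound, so the slack of a generic Sauer-type argument is insufficient, and the constraints $\log_{2} d < K$ and $K \geq 4$ must be used to the full. For $K=4$ and the admissible range $4 \leq d \leq 15$ the inequality is verified by a direct numerical check; for $K \geq 5$ the left-hand side grows as $K 2^{K}$ while the right grows only as $2K^{2} + K \log_{2} K + O(K)$, so the inequality persists by monotonicity in $K$. Combined with the shattering construction, this yields $2^{K} \leq VC\text{-}dim(H) \leq K 2^{K}$.
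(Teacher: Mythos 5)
Your proposal is correct and follows essentially the same route as the paper: the lower bound via an explicit set of $2^{K}$ hypercube-corner points separated by coordinate-threshold bit-functions, and the upper bound by bounding the growth function as (number of induced cell partitions)$\,\times\,2^{2^{K}}$ and checking that $(2d(n+1))^{K}2^{2^{K}}<2^{n}$ at $n\approx K2^{K}$ using $\log_{2}d<K$ and $K\geq 4$. Your counting is marginally more careful (it tracks the sign bit and the $n+1$ threshold gaps explicitly, and your shattered set actually lies in $[0,1]^{d}$), but the decomposition and the final numerical verification are the same as the paper's.
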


\begin{proof}
Since we only consider the sign of the chosen table entry $W(B(X))$, we may
equivalently consider the family of classifiers $f(X)=W(B(X))$ for $W\in {%
\{-1,1\}}^{2^{K}}$.

\textbf{Lower bound:} Consider a sample containing points on the $d$%
-dimensional cube%
\begin{equation}
S=\{P_{c}=(-1^{c_{1}},...-1^{c_{K}}):~c=(c_{1},..,c_{K})\in \{0,1\}^{K}~\}.
\end{equation}%
It follows that this sample can be shattered by the binary fern family. We
may choose the bit functions $b^{k}(X)=Q(X[k])$. Assume an arbitrary label
assignment $l$, i.e. for each example $P_{c}$ we have $l(P_{c})=y_{c}$ with $%
y_{c}$ arbitrarily chosen in $\{-1,1\}$. By definition, for each $c$ and $k$%
, we have $b^{k}(P_{c})=\{-1\}^{c_{k}}=P_{c}[k]$. Since any two points $%
P_{c_{1}},P_{c_{2}}$ differ by at least a single dimension, they will have at least
different bits in a single bit function. Hence, the $2^{K}$ points
are mapped into the $2^{K}$ different cells of $W$. By choosing $W[c]=y_{c}$
we may get the desired labeling $l$. Therefore, $S$ is shattered, showing $%
VCdim(H)\geq 2^{K}$.

\textbf{Upper Bound:} Assume $K\geq 4$ and $log_{2}(d)<K\leq d$. We compute
an upper bound on the number of possible labels (label vectors) enabled by
the binary fern family on a sample of size $n$, and show it is smaller than $%
2^{n}$ for $n=K2^{K}$.

For a $K$-bit fern, there are $d^{K}$ ways to choose the input dimension of
the $K$ bit-functions (one dimension per function). For each bit-function,
once the input dimension $I$ is chosen, we may re-order the examples
according to their value in dimension $I$, i.e., $X_{1}[I]\leq
X_{2}[I]...\leq X_{n}[I]$. Non-trivial partitions of the sample, i.e.
partitions into two nonempty sets, are introduced by choosing the threshold $%
t$ in $(X_{1}[I],X_{n}[I])$, and there are at most $n-1$ different options.
Including the trivial option of partitioning $S$ into $S$ and $\Phi $, there
are $n$ options.

Consider the number of partitions of the examples into $2^{K}$ cells, which
are made possible using $K$ bit-functions. Two different partitions must
differ w.r.t. the induced partition by a single bit-function choice, at
least. Hence, their number is bounded by the number of ways to choose the
bit-function partitions, which is $d^{K}n^{K}$ at most, according to the
above considerations. For each such partition into $2^{K}$ cells, $2^{K}$
different classifiers can be defined by choosing the table entries $%
\{W[c]\}_{c=0}^{2^{K}-1}$. Hence, the total number of possible different
fern classifiers for $n$ points is bounded by $(dn)^{K}2^{2^{K}}$. When this
bound is lower than $2^{n}$, the sample cannot be shattered. Thus, we solve the problem
\begin{equation}
(dn)^{K}2^{2^{K}}<2^{n}
\end{equation}%
and
\begin{equation}
2^{K}+Klog_{2}d+Klog_{2}n<n.  \label{Condition1}
\end{equation}%
By choosing $n=K2^{K}$ it follows that Eq.~\ref{Condition1} is satisfied
since
\begin{multline}
2^{K}+Klog_{2}d+Klog_{2}K+K^{2} \\
<2^{K}+3K^{2}\leq 2^{K}+(K-1)2^{K}\leq K2^{K}
\end{multline}%
For the second inequality we used $log_{2}d<K$ (assumed), and for the third
we used the inequality $3K^{2}\leq (K-1)2^{K}$ which holds for $K\geq 4$.
\end{proof}

Results similar to the Lemma~\ref{lemmaVC} were derived for trees~\cite%
{Mansour1997,Yildiz2015}, and the lemma implies that the capacity of ferns
is not significantly lower than that of trees. To understand its implications,
compare such a single-fern classifier to a classifier based on a single
dot-product neuron, i.e., a linear classifier. A binary dot-product neuron
with $d=K$ input performs $O(K)$ operations to obtain a VC dimension of $O(K)$%
. The fern also performs $O(K)$ computations, but the response is chosen
from a table of $2^{K}$ leaves, and the VC dimension is $\Omega (2^{K})$.
The higher (capacity)\textbf{:}(computational-complexity) ratio of ferns
indicates that they can compute significantly more complex functions than
linear neurons using the same computational budget.

\subsection{CT networks are universal approximators}

\label{subsec:Universality}

The following theorem states the universal approximation capabilities of a $%
2 $-layer CT network. It resembles the known result for two-stage
neural networks~\cite{Hassoun1995}.

\begin{theorem}
Any continuous function $f:[0,1]^{d}\rightarrow \lbrack 0,1]$ can be
approximated in $L_{\infty }$ using a two-layer fern network with $K=1$ in
all ferns.
\end{theorem}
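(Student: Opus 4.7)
The plan is to realize, exactly, a piecewise-constant approximation of $f$ on a uniform grid partitioning of $[0,1]^d$, using a two-layer fern network in which every fern has $K=1$. First I would invoke uniform continuity of $f$ on the compact cube: for any $\varepsilon>0$ pick $N$ large enough that the step function $\tilde f$ that is constant on each of the $N^d$ cells $\prod_{i=1}^d [a_i/N,(a_i{+}1)/N)$, taking on each cell the value of $f$ at a representative interior point, satisfies $\|f-\tilde f\|_\infty<\varepsilon$. The remaining task is to exhibit a $K=1$ two-layer fern network computing $\tilde f$ exactly.

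For the first layer I would set $D_o^{(1)}=1$ and use $M_1=d(N-1)$ ferns. For each coordinate $i\in\{1,\dots,d\}$ and each level $j\in\{1,\dots,N-1\}$ include a fern with bit-function $B_{i,j}(X)=Q(X[i]-j/N)$ and voting table $W_{i,j}[0]=0$, $W_{i,j}[1]=N^{i-1}$. Summing as in Eq.~(\ref{eq:Tout}) yields the scalar
\[
\phi(X)\;=\;\sum_{i=1}^{d} N^{i-1}\sum_{j=1}^{N-1}\mathbf{1}[X[i]>j/N]\;=\;\sum_{i=1}^{d} N^{i-1}\lfloor N X[i]\rfloor,
\]
which takes a distinct integer value in $\{0,\dots,N^d-1\}$ on each grid cell (boundary hyperplanes $X[i]=j/N$ form a measure-zero set and can be absorbed into one of their adjacent cells by the sign convention in $Q$; uniform continuity then still yields the $\varepsilon$-bound at boundaries up to an additive $\omega_f(\sqrt d/N)$, absorbed by choosing $N$ large).

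For the second layer I would set $D_o^{(2)}=1$ and express $\tilde f$, now viewed as a function on $\{0,\dots,N^d-1\}$, by telescoping:
\[
\tilde f(c)\;=\;\tilde f(0)\;+\;\sum_{c'=1}^{N^d-1}\bigl(\tilde f(c')-\tilde f(c'-1)\bigr)\,\mathbf{1}[c\ge c'].
\]
Each indicator $\mathbf{1}[\phi(X)\ge c']$ is realized by a single $K{=}1$ fern whose bit-function compares the scalar input $\phi(X)$ to the threshold $c'-\tfrac12$, with voting table $[0,\ \tilde f(c')-\tilde f(c'-1)]$; a final fern with both entries equal to $\tilde f(0)$ supplies the constant term. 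Summing these $N^d$ ferns produces exactly $\tilde f(\phi(X))$, so the overall network output differs from $f$ by at most $\varepsilon$ in $L_\infty$.

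The main conceptual obstacle is that a $K{=}1$ fern applied to a high-dimensional input only inspects one coordinate and outputs one of two values, which looks too impoverished to combine information across coordinates. The resolution is that a CT layer \emph{sums} the per-fern votes, so the first-layer weights $N^{i-1}$ multiplex the $d$ coordinate-wise discretizations into a single injective scalar encoding of the grid cell; once that encoding exists, the second layer's task collapses to approximating an arbitrary step function of one scalar variable, which is exactly what a sum of $K{=}1$ ferns on a scalar can do via the telescoping construction above. The remaining details (choice of $N$ from $\omega_f$, and the boundary convention) are standard and do not require quadratic depth or any interaction-capturing $K\ge 2$ fern.
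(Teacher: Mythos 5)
Your proof is correct, but it takes a genuinely different route from the paper's. The paper approximates $f$ by a sum of rectangle functions $\sum_p v_p 1_{x\in R_p}$ and detects membership in each rectangle $R_p$ by an AND of $2d$ half-space bits: the first layer allocates one output channel per rectangle, $2d$ one-bit ferns vote $+1$ into channel $p$ so that $Y[p]=2d$ exactly on $R_p$, and a single second-layer fern per rectangle thresholds $Y[p]$ at $2d-\tfrac12$ and votes $v_p$. You instead restrict to a uniform $N^d$ grid (justified by uniform continuity), and use the first layer's \emph{summation} to build a single-channel injective base-$N$ encoding $\phi(X)=\sum_i N^{i-1}\lfloor NX[i]\rfloor$ of the cell index with only $d(N-1)$ ferns and $D_o^{(1)}=1$; the second layer then realizes an arbitrary function of the integer $\phi(X)$ by telescoping differences across $N^d$ one-bit ferns. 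Your construction is leaner in the first layer (one channel, $O(dN)$ ferns versus $P$ channels and $2dP$ ferns) but pays with $N^d$ second-layer ferns and leans harder on exact integrality of $\phi$: the second layer must separate consecutive integers among values up to $N^d-1$, so the encoding is less robust to perturbation than the paper's "all $2d$ bits fire" test, though perfectly valid for an exact existence argument. The paper's construction also handles arbitrary (non-grid) rectangles directly, which is why it invokes density of general step functions rather than uniform continuity. Both share the same minor unaddressed subtleties (the $Q(0)$ convention on cell boundaries, and second-layer thresholds lying outside $[0,1]$), which you in fact treat more carefully than the paper does.
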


\begin{proof}
The core of the proof is to show that by using $2K$ 1-bit ferns at the first
layer, a second layer fern can create a function with an arbitrary value
inside a hyper-rectangle of choice, and zero outside the rectangle. Since
sums of such `step functions' are dense in $C[0,1]$, so are two-layer fern
networks.

Let $R=[a_{1},b_{1}]\times \lbrack a_{2},b_{2}]...\times \lbrack
a_{d},b_{d}] $ be a hyper rectangle in $R^{d}$ and $v\in R$ a scalar value.
A rectangle function $s(x;v,R):[0,1]^{d}\rightarrow R$ is defined as $v\cdot
1_{x\in R}$, i.e. a function whose value is $f(x)=v$ for $x\in R$ and $0$
otherwise. Define the family of step functions
\begin{equation}
G=\{g:[0,1]^{d}\rightarrow \lbrack
0,1]:g(x)=\sum_{p=1}^{P}s(x;v_{p},R_{p})\}.
\end{equation}%
It is known that $G$ is dense in $(C[0,1]^{d})$ (from the Stone-Weierstrass
theorem, see e.g.~\cite{eidelman2004functional} ). We will show that the
family of $2$-layer CT networks includes this set.

Let $s(v,R)$ be an arbitrary rectangle function. For each dimension $%
i=1,..,d $, define the following two bit-functions: $L_{i}(x)=Q(x-a_{k})$
and $R_{i}(x)=Q(-(x-b_{k}))$. Denote these bit-functions by $B_{(R)}^{j}$
for $j=1,..,2d$. By construction, ${\{B_{R}^{j}\}}_{j=1}^{2d}$ characterize
the rectangle%
\begin{equation}
x\in R\iff \forall j=1,..,2d~~~B_{R}^{j}(x)=1.
\end{equation}%
Equivalently, we have $x\in R$ iff $\sum_{j=1}^{2d}B_{R}^{j}(x)>2d-\frac{1}{2%
}$. Given a function $g(x)=\sum_{p=1}^{P}s(x;v_{p},R_{p})$ to implement, we
define at the first layer $P$ sets of ferns. Fern set $p$ contains $2d$
ferns, denoted by $F_{p}^{j}$, with a single bit-function each, and the
bit-function of $F_{p}^{j}$ is $B_{R_{p}}^{j}$. The output dimension of the
first layer is $P$. The weight table $W_{p}^{j}\in M_{2\times P}$ of $%
F_{p}^{j}$ is defined by
\begin{equation}
W_{p}^{j}[i,l]=%
\begin{cases}
1 & i=1,l=p \\
0 & otherwise%
\end{cases}%
.
\end{equation}%
The output vector of a layer is the sum of all ferns $Y=\sum_{p^{\prime
}=1}^{P}\sum_{j=1}^{2d}W_{p^{\prime }}^{j}[B_{R_{p^{\prime }}}^{j}(x),:].$
By construction we have
\begin{eqnarray}
Y[p] &=&\sum_{p^{\prime }=1}^{P}\sum_{j=1}^{2d}W_{p^{\prime
}}^{j}[B_{R_{p}^{\prime }}^{j}(x),p] \\
&=&\sum_{j=1}^{2d}W_{p}^{j}[B_{R_{p}}^{j}(x),p]=%
\sum_{j=1}^{2d}B_{R_{p}}^{j}(x).  \notag
\end{eqnarray}%
The second equality holds since $W_{p^{\prime }}^{j}[i,p]$ only differs from
zero for $p=p^{\prime }$. The last equality is valid since $W_{p}^{j}[i,p]=i$
for $i=0,1$. We hence have $Y[p]=2d>2d-\frac{1}{2}$ for $x\in R_{p}$, that
has lower values anywhere else.

The second layer of ferns contains $P$ ferns, each with a single bit and a
single output dimension. For fern $p$, we define the bit-function $%
B_{p}^{(2)}(Y)=Q(Y[p]-(2d-\frac{1}{2}))$ which fires only for $x\in R_{p}$
and the table $W_{p}^{(2)}=[0,v(p)]$. The output unit $U$ computes
\begin{eqnarray}
U &=&\sum_{p=1}^{P}W_{p}^{(2)}[B_{p}^{(2)}(x)] \\
&=&\sum_{p=1}^{P}v(p)\cdot B_{p}^{(2)}(x)=\sum_{p=1}^{P}v(p)\cdot 1_{x\in
R_{p}}.  \notag
\end{eqnarray}%
Therefore, $U$ implements the function $g(x)$, which completes the proof.
\end{proof}

\section{Training with soft Convolutional Tables}

\label{sec:Optimization}

Following Eq.~\ref{eq:BitFunctions}, $B^{k}\left( P;\Theta _{k}\right)
=Q(b^{k}(P;\Theta _{k}))$ does not enable gradient-based optimization, as
the derivative of the Heaviside function $Q(\cdot )$ is zero almost
anywhere. We suggest a soft version of the CT in Section~\ref{subsec:forward}
that enables gradient-based optimization, and discuss its gradient
in~Section \ref{subsec:backward}.

\subsection{A soft CT version}

\label{subsec:forward} To enable gradient-based learning, we suggest replacing the Heaviside function during optimization with a linear sigmoid $%
q(x;t)$ for $t>0$:
\begin{equation}
q(x;t)=min(max(~(t+x)/2t~,0),1).  \label{eq:softIndicator}
\end{equation}

$q(x;t)$ is identical to the Heaviside function for $x\gg 0$ and is a linear
function in the vicinity of $x=0$. It has a nonzero gradient in $[-t,t]$,
where $t$ is a hyperparameter controlling its smoothness. For low $t$
values, we have $q(x;t)\underset{t\rightarrow 0}{\rightarrow }Q(x)$.
Moreover, we have $q(x;t)+q(-x;t)=1$. Thus, it can be interpreted as a
pseudo-probability, with $q(x)$ estimating the probability of a bit being $1$
and $q(-x)$ the probability of being 0.

Following Section~\ref{sec:Model}, a word calculator $B$ maps a patch $P$ to
a single index or equivalently to a one-hot vector (row) in $R^{2^{K}}$.
Denote the word calculator in the latter view (mapping into $R^{2^{K}})$
by $\vec{B}(P;\Theta )$. Given this notation, the CT output is given by the
product $\vec{B}\cdot W$. Hence, extending the word calculator to a soft
function in $R^{2^{K}}$, with dimension $b\in \{0,..,2^{K}-1\}$ measuring
the activity of the word $b$, provides a natural differentiable extension of
the CT formulation. For an index $b,$ define the sign of its $k^{th}$ bit by
$s(b,k)\triangleq (-1)^{(1+u(b,k))}$, where $u(b,k)$ denotes the $k^{th}$
bit of the index $b$ in its standard binary expansion, such that%
\begin{equation}
s(b,k)=\left\{
\begin{tabular}{rl}
$1$ & bit $k=1$ \\
$-1$ & bit $k=0$%
\end{tabular}%
\right. .
\end{equation}%
The activity level of the word $b$ in a soft word calculator is defined by%
\begin{equation}
\vec{B}^{s}(P;\Theta )[b]=\prod_{k=1}^{K}~q(~s(b,k)\cdot b_{k}(P;\Theta
_{k});t~).  \label{eq:SoftFrensBitsSign}
\end{equation}

Intuitively, the activity level of each possible codeword index $b$ is a
product of the activity levels of the individual bit-functions in the
directions implied by the codeword bits. Marginalization implies that $%
\sum_{b=0}^{2^{K}-1}\vec{B}^{s}(P)[b]=1$, and hence the soft-word calculator
defines a probability distribution over the possible $K$-bit words. Soft
CT is defined as natural extension $\vec{B}^{s}W$. Note that as $%
t\rightarrow 0$, the sigmoid $q(x;t)$ becomes sharp, $\vec{B}^{s}$ becomes a
one-hot vector and the soft CT becomes a standard CT as defined in Section~%
\ref{sec:Model}, without any inner products.

Soft CT can be considered as a consecutive application of two layers: a
soft indexing layer $\vec{B}^{s}$ followed by a plain linear layer (though
sparse if most of the entries in $\vec{B}^{s}$ are zero). Since CT is
applied to all spatial locations on the activation map, the linear layer $W$
corresponds to a $1\times 1$ convolution. We implemented a sparse $1\times 1$
convolutional layer operating on a sparse tensor $\mathbf{x\in }R^{H\times
W\times 2^{K}}$ and outputting a dense tensor $\mathbf{y\in }$ $R^{H\times
W\times D_{o}}$.

\subsection{Gradient computation and training}

\label{subsec:backward}

Since applying the soft CT in a single position is $W\cdot B^{s}(P;\Theta ),$
with $W$ being a linear layer, it suffices to consider the gradient of $\vec{%
B}^{s}(P;\Theta )$. Denote the neighborhood $l\times l$ of the location $P$ in $%
T^{in}$ by $T_{N(P)}^{in}$. Taking into account a single output variable $b$ at a
time, the gradient with respect to the input patch $T_{N(P)}^{in}$ is given
by
\begin{equation}
\frac{\partial \vec{B}^{s}(P)[b]}{\partial T_{N(P)}^{in}}=\sum_{k}\frac{%
\partial \vec{B}^{s}(P)[b]}{\partial b_{k}(P;\Theta _{k})}\cdot \frac{%
\partial b_{k}(P;\Theta _{k})}{\partial T_{N(P)}^{in}},
\end{equation}%
with a similar expression for the gradient with respect to $\Theta $. For a
fixed index $k$, the derivative $\frac{\partial \vec{B}^{s}(P)[b]}{\partial
b_{k}(P;\Theta _{k})}$ is given by
\begin{multline}
\frac{\partial \vec{B}^{s}(P)[b]}{\partial b_{k}(P;\Theta _{k})}= \\
\prod_{j\neq k}~q(s(b,j)b_{j}(P;\Theta _{j});t)\frac{\partial
q(s(b,j)b_{j}(P;\Theta _{j});t)}{\partial b_{k}(P;\Theta _{k})}= \\
\frac{\vec{B}^{s}(P)[b]\cdot t^{-1}\cdot 1_{|b_{k}|<t}\cdot s(b,k)}{%
2q(~s(b,k)\cdot b_{k}(P;\Theta _{k}))}.~~~
\end{multline}%
The derivative is non-zero when the word $b$ is active (i.e. $\vec{B}%
^{s}(P)[b]>0$), and the bit-function $b_{k}(P)$ is in the dynamic range $%
-t<b_{k}(P)<t$.

To further compute the derivatives of $b_{k}(P)$ with respect to the input
and parameters ($\frac{\partial b_{k}(P;\Theta _{k})}{dT_{N(P)}^{in}}$ and $%
\frac{\partial b_{k}(P;\Theta _{k})}{d\Theta }$, respectively), we first
note that the forward computation of $b_{k}(P)$ estimates the input tensor
at fractional spatial coordinates. This is due to the fact that the offset parameters $%
\Delta x_{1},\Delta y_{1},\Delta x_{2},\Delta y_{2}$ are trained with
gradient descent, which requires them to be continuous. We use bilinear
interpolation to compute image values at fractional coordinates in the
forward inference. Hence, such bilinear interpolation (of spatial gradient
maps) is required for gradient computation. For instance, $\frac{\partial
b_{k}(P)}{\partial \Delta x_{1}^{k}}$ is given by%
\begin{multline}
\frac{\partial b_{k}(P)}{\partial \Delta x_{1}^{k}}=\frac{\partial
T^{in}\left( p_{x}+\Delta x_{1}^{k},p_{y}+\Delta y_{1}^{k},c\right) }{%
\partial \Delta x_{1}^{k}}  \label{eq:bitFuncGrad} \\
=\frac{\partial T^{in}(:,:,c)}{\partial x}[(p_{x}+\Delta
x_{1}^{k},p_{y}+\Delta y_{1}^{k})],
\end{multline}%
where $\frac{\partial T^{in}(:,:,c)}{\partial x}$ is the partial
x-derivative of the image channel $T^{in}(:,:,c)$, that is estimated
numerically and sampled at $\left( x,y\right) =(p_{x}+\Delta
x_{1}^{k},p_{y}+\Delta y_{1}^{k})$ to compute Eq.~\ref{eq:bitFuncGrad}. The
derivatives with respect to other parameters are computed in a similar way. Note
that the channel index parameters $c^{k}$ are not learnt - these are fixed
during network construction for each bit-function. As for $\frac{\partial
b_{k}(P;\Theta _{k})}{dT_{N(P)}^{in}}$, by considering Eq.~\ref%
{eq:BitFunctions}, it follows that $b_{k}(P;\Theta _{k})$ only relates to
two fractional pixel locations: $\left( p_{x}+\Delta x_{1}^{k},p_{y}+\Delta
y_{1}^{k}\right) $ and $\left( p_{x}+\Delta x_{2}^{k},p_{y}+\Delta
y_{2}^{k}\right) $. This implies that a derivative with respect to eight
pixels is required, as each value of a fractional coordinates pixel is
bilinearly interpolated using its four neighboring pixels in integer
coordinates.

The sparsity of the soft word calculator (both forward and backward) is
governed by the parameter $t$ of the sigmoid in Eq.~\ref{eq:softIndicator},
which acts as a threshold. When $t$ is large, most of the bit-functions are
ambiguous and soft i.e., not strictly $0$ or $1$, and the output will be
dense. We can control the output's sparsity level by adjusting $t$. In
particular, as $t\rightarrow 0,$ the bit-functions become hard, and $%
B^{s}(P) $ converges toward a hard fern with a single active output word.
While a dense flow of information and gradients is required during training,
fast inference in test time requires a sparse output. Therefore, we introduce an
annealing scheduling scheme, such that $t$ is initiated as $t\gg 0$, set to
allow a fraction $f$ of the values of the bit functions to be in the `soft zone' $%
[-t,t]$. The value of $t$ is then gradually lowered to achieve a sharp and
sparse classifier towards the end of the training phase.

\section{Experimental Results}

\label{sec:Experiments}

We experimentally examined and verified the accuracy and computational
complexity of the proposed CT scheme, applying it to multiple small and
medium image datasets, consisting of up to 250K images. Such datasets are
applicable to IoT and non-GPU low compute devices. We explored the
sensitivity of CT networks to their main hyperparameters in Section~\ref%
{sec:ablation_study}. To this end, we studied the number of ferns $M$ and the
number of bit functions per fern $K$, and the benefits of distillation as a
CT training technique. We then compared our CT-based architectures to
similar CNN classes. Such a comparison is nontrivial due to the inherent
architectural differences between the network classes. Thus, in Section~\ref%
{subsec:SimilarRepDim} we compared Deep CT to CNNs with similar depth
and width, that is, the number of layers and maps per layer. The comparison focuses
on efficient methods suggested for network quantization and binarization, since
the CT framework uses similar notions. In Section \ref{subsec:TradeOffs} the
CT is compared to CNN formulations with comparable computational complexity,
i.e., similar number of MAC (Multiply-ACcumulate) operations. We compared
CT with CNN architectures specifically designed for inference efficiency,
such as MobileNet~\cite{sandler2018mobilenetv2} and ShuffleNet~\cite%
{ma2018shufflenet}. The comparison was applied by studying the speed:error
and speed:error:memory trade-offs of the different frameworks. Finally, to
exemplify the applicability of Deep CT to low-power Internet-of-Things (IoT)
applications and larger datasets (250K images), it was applied to IoT-based
face recognition in~Section \ref{sec:WebCasia}.
\begin{figure*}[tbh]
\centering%
\begin{tabular}{ccc}
\subfigure[]{\includegraphics[height=0.23%
\linewidth]{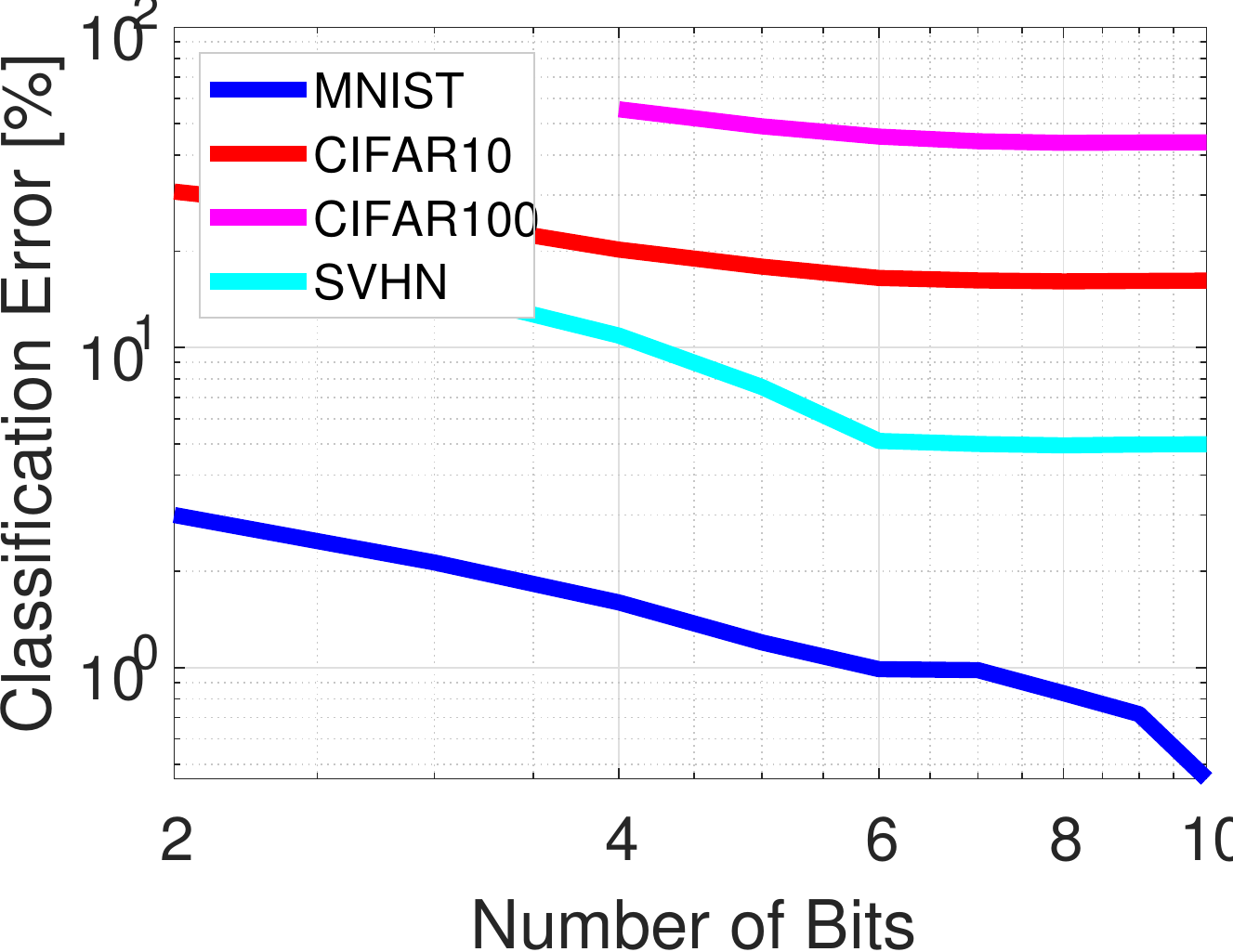}} & \subfigure[]{%
\includegraphics[height=0.23\linewidth]{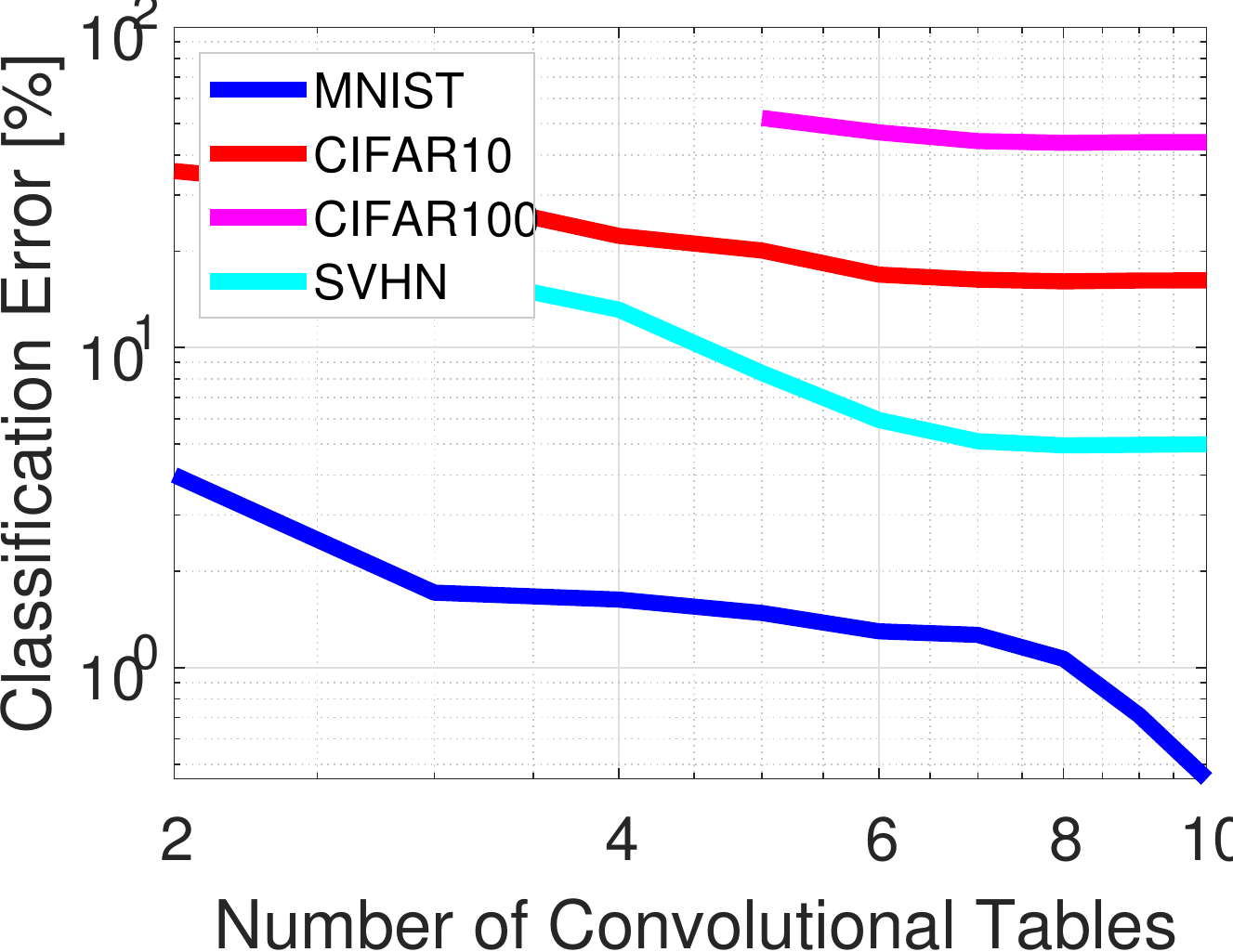}} &
\subfigure[]{\includegraphics[height=0.23
\linewidth]{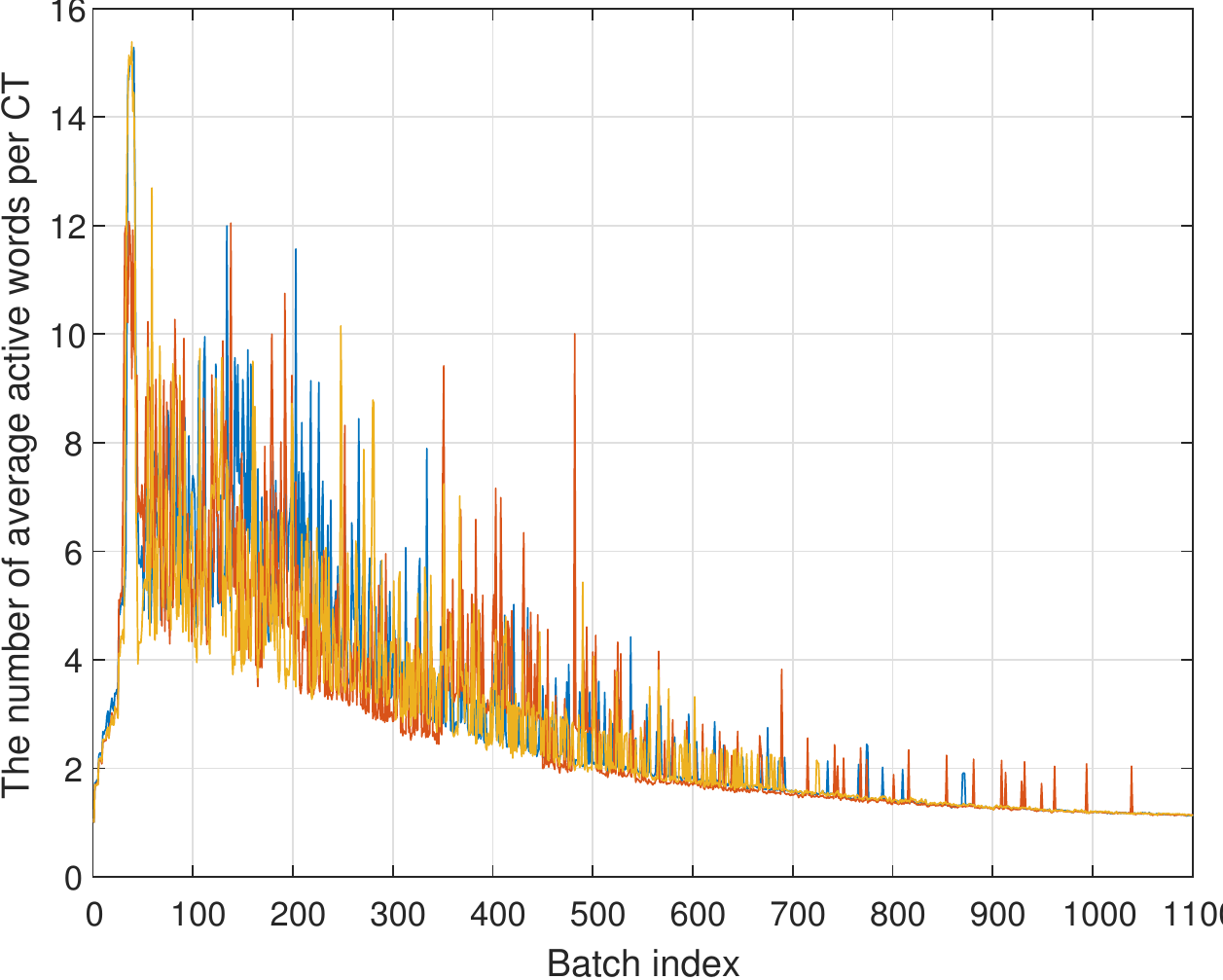}}%
\end{tabular}%
\caption{Hyperparameters study on MNIST, CIFAR10, CIFAR100, and SVHN
datasets. \textbf{(a)} Deep CT network error Vs. the number of bits $K$ in a
log-log plot. In all layers, the number of convolutional tables $M$ was
fixed to $10$. \textbf{(b)} Deep CT network error Vs. the number of
convolutional tables $M$ in a log-log plot, for a fixed number of bits $K=10$%
. \textbf{(c)} The average number of active words per CT was evaluated using
the CIFAR-10 dataset and plotted against the training batch index. Each
convolution table is represented by a different color: yellow, blue, and
red. As expected, the number of active words decreases during the training
process due to the annealing mechanism and eventually converges to a single
active word.}
\label{fig:DeepCtAblation}
\end{figure*}

The proposed novel CT layer does not use standard CNN components such as
activations, convolutions, or FCs. Thus, the Deep CT word calculator layer
and the sparse voting table layer were implemented from scratch, in
unoptimized C, compiled as MEX files, and applied within a Matlab
implementation. Our experiments were carried out on the same computer using
CPU-only implementations for all schemes.

Similar to other inherently novel layers, such as Spatial Transformer
Networks, it is challenging to implement a converging end-to-end training
process. To facilitate the convergence of the proposed scheme, a three-phase
training scheme was applied. First, the lower half of the CNN (layers
\#1-\#6 in Table \ref{table:webcasia_arch}) was trained, by adding the
Average Pooling (AP), Softmax (SM) and Cross Entropy (CE) layers on top of
layer \#6. Then, the full architecture was trained with layers \#1 through
\#6 frozen. Finally, following Hinton et al. \cite{Hinton2015}, we unfroze all
layers and refined the entire network using a distillation-based approach.
The CT was trained to optimize a convex combination of CE loss and
Kullback-Leibler divergence with respect to the probabilities induced by a
teacher CNN. The temperature parameter was initially set at $t=4$ and
gradually annealed to $t=1$ toward the end of the training. All Deep CT
networks were trained from scratch using random Gaussian initialization of
the parameters.

\subsection{Hyperparameters sensitivity}

\label{sec:ablation_study}

The expressive power of the Deep CT transformation is related to the number
of CTs per layer $M$, and the number of bit-functions $K$ used in each CT.
We tested the sensitivity of the Deep CT network to these parameters and the
suggested annealing schedule. These experiments applied CT networks with
four layers to the SVHN, CIFAR-10, and CIFAR-100 datasets, and two layers to
the MNIST dataset. The Networks are similar to those shown in the appendix,
in Tables~\ref{table:NetworkArch_2} and \ref{table:NetworkArch_4}. We used a
fixed patch size $l=5$ and input and output dimensions of $D_{i}=D_{o}=32$,
respectively. It is best to apply CT networks with a patch size larger than
the kernel size of CNN-based networks, since CT patches are sparsely
represented. Comparatively, CNN complexity is quadratic with respect to
patch size, while CT networks' computational complexity is independent of
patch size.

The classification errors with respect to the bit-functions $K$ (for a fixed
$M$), and the number of CTs $M$ (for a fixed $K$) are shown in Figs.~\ref%
{fig:DeepCtAblation}a and ~\ref{fig:DeepCtAblation}b, respectively. For most
datasets, there is a range in which the error exponentially decreases as the
number of network parameters increases, while the accuracy improvement
saturates for $M>8$ and $K>8$ for the deeper networks (datasets other than
MNIST).

Figure \ref{fig:DeepCtAblation}c shows the average number of active words in
a convolutional table as a function of the training time for CIFAR-10. The
fraction $f$ of ambiguous bits (bits whose value is not set to either $0$ or
$1$) was started at $0.2$ and was gradually reduced exponentially, while the
sigmoid threshold $t$ was adjusted periodically according to the required $f$%
. In our experiments, the sigmoid threshold was set to $t=0.5$. The number
of active words per CT is thus gradually reduced and converged to a single
active word in the final classifier. The CNN with four layers obtained a
test error of 12.55\%, compared to 12.49\% with the same architecture
without annealing. Therefore, annealing-based training led to a sparser
classifier without significantly degrading the classification accuracy.

To establish the best hyperparameters, additional experiments were carried
out for multiple M and K combinations around $M=8,K=8$. The experiments were
conducted using the CIFAR-10 dataset, and the CT networks having four and
six layers, detailed in Tables~\ref{table:NetworkArch_4} and \ref%
{table:NetworkArch_6}, respectively. In addition, we examined training with
and without distillation from a teacher. The $M=8,K=8$ configuration was
found to be the most accurate. Distillation was found to be beneficial, providing
accuracy improvements of $1\%-2\%$.
\begin{table}[tbh]
\centering%
\begin{tabular}{ccccc}
\hline
$\mathbf{K}$ & \multicolumn{1}{|c}{$\mathbf{M}$} & \multicolumn{1}{|c}{%
\textbf{Layers\#}} & \multicolumn{1}{|c}{\textbf{Distil}} &
\multicolumn{1}{|c}{\textbf{Error [\%]}} \\ \hline\hline
9 & 8 & 6 & No & 11.31 \\
8 & 9 & 6 & No & 12.31 \\
\textbf{8} & \textbf{8} & 6 & No & \textbf{11.05} \\
7 & 8 & 6 & No & 13.93 \\
8 & 7 & 6 & No & 13.34 \\
\textbf{8} & \textbf{8} & 6 & Yes & \textbf{10.25} \\ \hline
9 & 8 & 4 & No & 12.88 \\
8 & 9 & 4 & No & 13.05 \\
\textbf{8} & \textbf{8} & 4 & No & \textbf{12.49} \\
7 & 8 & 4 & No & 13.25 \\
8 & 7 & 4 & No & 12.90 \\
\textbf{8} & \textbf{8} & 4 & Yes & \textbf{10.91} \\ \hline
\end{tabular}%
\caption{Joint hyperparameters tuning for the four and six layers CT
networks described in Tables~\protect\ref{table:NetworkArch_4} and \protect
\ref{table:NetworkArch_6}. $M$ and $K$ denote the Convolution tables' size
and the number of bits, respectively. }
\label{fig:ablation_study_table}
\end{table}
\begin{table*}[tbh]
\centering%
\begin{tabular}{cl|cc|ccc}
\hline
\multicolumn{2}{c|}{\textbf{Deep CT}} & \textbf{Parameters } &
\multicolumn{1}{|c|}{\textbf{Output}} & \textbf{Convolution} &
\multicolumn{1}{|c}{\textbf{Parameters}} & \multicolumn{1}{|c}{\textbf{Output%
}} \\
\multicolumn{2}{c|}{\textbf{Layer}} & $l,K,M,D_{o}$ & \multicolumn{1}{|c|}{}
& \textbf{Layers} & \multicolumn{1}{|c}{} & \multicolumn{1}{|c}{} \\
\hline\hline
1 & CT Layer 1 & $7,8,8,32$ & $26\times 26\times 32$ & \multicolumn{1}{|l}{
Conv,ReLU} & $l=3,D_{o}=32$ & $30\times 30\times 32$ \\
2 & Average Pooling & $l=3$ & $24\times 24\times 32$ & \multicolumn{1}{|l}{
Max Pooling} & $l=3,Stride=2$ & $14\times 14\times 32$ \\
3 & CT Layer 2 & $5,8,8,32$ & $20\times 20\times 32$ & \multicolumn{1}{|l}{
Conv,ReLU} & $l=3,D_{o}=32$ & $12\times 12\times 32$ \\
4 & Average Pooling & $l=3$ & $18\times 18\times 32$ & \multicolumn{1}{|l}{
Max Pooling} & $l=2$ & $11\times 11\times 32$ \\
5 & CT Layer 3 & $5,8,8,64$ & $14\times 14\times 64$ & \multicolumn{1}{|l}{
Conv,ReLU} & $l=3,D_{o}=64$ & $9\times 9\times 64$ \\
6 & Average Pooling & $l=3$ & $12\times 12\times 64$ & \multicolumn{1}{|l}{
Max Pooling} & $l=2$ & $8\times 8\times 64$ \\
7 & CT Layer 4 & $5,8,8,128$ & $8\times 8\times 128$ & \multicolumn{1}{|l}{
Conv,ReLU} & $l=3,D_{o}=128$ & $6\times 6\times 128$ \\
8 & Average Pooling & $l=2$ & $7\times 7\times 128$ & \multicolumn{1}{|l}{
Max Pooling} & $l=2$ & $5\times 5\times 128$ \\
7 & CT Layer 5 & $3,8,8,64$ & $5\times 5\times 64$ & \multicolumn{1}{|l}{
Conv,ReLU} & $l=3,D_{o}=64$ & $4\times 4\times 64$ \\
8 & Average Pooling & $l=2$ & $4\times 4\times 64$ & \multicolumn{1}{|l}{Max
Pooling} & $l=2$ & $3\times 3\times 64$ \\
9 & CT Layer 6 & $3,8,8,10$ & $2\times 2\times 10$ & \multicolumn{1}{|l}{
Conv,ReLU} & $l=3,D_{o}=10$ & $1\times 1\times 10$ \\
10 & Average Pooling & $l=2$ & $1\times 1\times 10$ & \multicolumn{1}{|l}{FC}
& $D_{o}=10$ & $1\times 1\times 10$ \\
11 & SoftMax &  &  & \multicolumn{1}{|l}{SoftMax} &  & $1$ \\ \hline
\end{tabular}%
\caption{The six-layer Deep CT and CNN networks applied to the CIFAR10,
CIFAR100, and SVHN datasets. The CNN binarization schemes were applied to
the CNN architecture, where $l$ and $K$ denote the patch size and the number
of bits, $M$ is the number of convolutional tables and $D_{o}$ is the number
of output maps.}
\label{table:NetworkArch_6}
\end{table*}
\begin{table*}[tbh]
\centering%
\begin{tabular}{lcccccccccc|cc}
\hline
\textbf{Dataset} & \multicolumn{1}{|c}{\textbf{DeepCT}} &
\multicolumn{1}{|c}{\textbf{BC}} & \multicolumn{1}{|c}{\textbf{XN}} &
\multicolumn{1}{|c}{\textbf{XN++}} & \multicolumn{1}{|c}{\textbf{TABCNN}} &
\multicolumn{1}{|c}{\textbf{DSQ}} & \multicolumn{1}{|c}{\textbf{DSQ}} &
\multicolumn{1}{|c}{\textbf{DSQ}} & \multicolumn{1}{|c|}{\textbf{Bi-Real}} &
\multicolumn{1}{|c|}{\textbf{SLB}} & \textbf{DeepCT } & \multicolumn{1}{|c}{%
\textbf{CNN}} \\
& \multicolumn{1}{|c}{} & \multicolumn{1}{|c}{\textbf{\cite{BinaryConnect}}}
& \multicolumn{1}{|c}{\textbf{\cite{XNOR-Net}}} & \multicolumn{1}{|c}{%
\textbf{\cite{XNOR_plus}}} & \multicolumn{1}{|c}{\textbf{\cite{LinZP17}}} &
\multicolumn{1}{|c}{\textbf{\cite{DSQ_method} 2b}} & \multicolumn{1}{|c}{%
\textbf{\cite{DSQ_method} 4b}} & \multicolumn{1}{|c}{\textbf{\cite%
{DSQ_method} 6b}} & \multicolumn{1}{|c|}{\textbf{\cite{Bi_Real_Net}}} &
\textbf{\cite{SLB} } & \textbf{+ distil.} & \multicolumn{1}{|c}{} \\
\hline\hline
\multicolumn{10}{c}{\textbf{Two layers [\%]}} &  &  &  \\ \hline\hline
\textbf{MNIST} & \textbf{0.35} & 0.61 & 0.51 & 0.39 & 0.49 & 0.41 & 0.38 &
0.36 & 0.36 & 0.37 & 0.39 & 0.334 \\ \hline
\multicolumn{10}{c}{\textbf{Four layers [\%]}} &  &  &  \\ \hline\hline
\textbf{CIFAR10} & \textbf{12.49} & 14.47 & 14.77 & 12.97 & 13.97 & 12.87 &
12.71 & 12.68 & 12.72 & 12.66 & 10.91 & 10.86 \\
\textbf{CIFAR100} & \textbf{39.17} & 50.27 & 47.32 & 41.03 & 45.32 & 40.14 &
40.05 & 39.96 & 39.97 & 39.55 & 37.12 & 36.81 \\
\textbf{SVHN} & \textbf{4.75} & 6.21 & 5.88 & 4.92 & 5.11 & 4.85 & 4.79 &
4.77 & 4.79 & 4.80 & 4.46 & 4.27 \\ \hline
\multicolumn{10}{c}{\textbf{Six layers [\%]}} &  &  &  \\ \hline\hline
\textbf{CIFAR10} & \textbf{11.05} & 12.56 & 12.91 & 11.88 & 12.13 & 11.35 &
11.27 & 11.14 & 11.19 & 11.16 & 10.25 & 9.88 \\
\textbf{CIFAR100} & \textbf{37.95} & 42.43 & 40.14 & 39 & 39.37 & 38.31 &
38.25 & 38.12 & 38.15 & 38.05 & 36.05 & 35.58 \\
\textbf{SVHN} & \textbf{4.12} & 4.95 & 4.54 & 4.25 & 4.28 & 4.21 & 4.18 &
4.15 & 4.17 & 4.25 & 3.96 & 3.77 \\ \hline
\end{tabular}%
\caption{Comparison of the classification error percentage. We compare the
results for CNNs consisting of 2 layers (MNist) or 4 and 6 layers (other
datasets). All quantized/binarized CNNs were derived by applying the
corresponding quantization/binarization schemes to the baseline CNN having
the same number of layers (Tables~\protect\ref{table:NetworkArch_2}, \protect
\ref{table:NetworkArch_4} and \protect\ref{table:NetworkArch_6}). The most
accurate results are marked in \textbf{bold}.}
\label{table:DCT2-results}
\end{table*}

\subsection{Accuracy comparison of low-complexity architectures}

\label{subsec:SimilarRepDim}

Although deep CT and CNN networks use very different computational
elements, both use a series of intermediate representations to perform
classification tasks. Hence, we consider CT and CNN backbone networks
similar if they have the same number of intermediate representations (depth)
and the same number of maps in each representation (width). In this section,
we report experiments comparing CNN and CT networks that have identical
configurations in these respects. Although the intermediate spatial tensors
were chosen to be similar, we allowed for different top inference layers. Those
were chosen to optimize the classification accuracy for each network class.
Thus, the CNN-based networks used max-pooling and an FC layer, while the CT
CNN only used an average pooling layer.

We compared the classification accuracy of CT networks to standard CNNs and
contemporary low-complexity CNN architectures based on CNN binarization
schemes. The methods we compared against include the BinaryConnect (BC) ~%
\cite{BinaryConnect} \footnote{%
https://github.com/MatthieuCourbariaux/BinaryConnect}, XNOR-Net (XN) \cite%
{XNOR-Net} \footnote{%
https://github.com/rarilurelo/XNOR-Net}, XNOR-Net++ (XN++)\cite{XNOR_plus},
Bi-Real Net (Bi-Real) \cite{Bi_Real_Net}, SLB \cite{SLB} \footnote{%
https://github.com/zhaohui-yang/Binary-Neural-Networks/tree/main/SLB} and
TABCNN\cite{LinZP17}\footnote{%
https://github.com/layog/Accurate-Binary-Convolution-Network}, using their
publicly available implementations. These methods, \cite{XNOR-Net} and \cite%
{LinZP17} in particular, also encode the input activity using a set of
binary variables. In contrast to our CT table-based approach, they use dense
encodings and apply a binary dot-product to their encoded input. Finally, we
applied the state-of-the-art Differentiable Soft Quantization (DSQ) scheme
\cite{DSQ_method}\footnote{%
https://github.com/ricky40403/DSQ} to the conventional CNN to quantize it to
2, 4, and 6 bits.

All networks were applied to the MNIST \cite{lecun2010mnist}, CIFAR-10 \cite%
{CIFAR10}, CIFAR-100 \cite{CIFAR100}, and SVHN \cite{SVHN} datasets. For the
MNIST dataset, we applied CT networks and CNNs consisting of two layers, as
this relatively small dataset did not require additional depth. For the
other datasets, networks consisting of four and six layers were applied.

The architectures of the six-layer networks are detailed in Table~\ref%
{table:NetworkArch_6}, and the architectures of the two and four layers
used are detailed in the appendix in Tables~\ref{table:NetworkArch_2} and %
\ref{table:NetworkArch_4}, respectively. The classification accuracy results
are reported in Table~\ref{table:DCT2-results}. On the left-hand side of the
table, all methods are compared using the cross-entropy loss. The proposed
CT networks outperformed the binarization and discretization schemes in all
cases and were outperformed by conventional CNNs. In the right-hand
column of the table, we show the results of a CT network using a
distillation loss and a VGG16~\cite{Zhang2015} teacher network. These CT
network improve the accuracy significantly further, and the margin of CNNs
over them is also significantly smaller. Several key findings can be drawn
from these results: First, the good accuracy of CT networks indicates that
they can be trained using standard SGD optimization, such as CNNs, without
significant overfitting. Second, the similar precision of the CT and CNN networks
with a similar intermediate representation structure indicates that the accuracy
is more dependent on the representation hierarchy structure than the
particular computing elements used.
\begin{figure*}[tbh]
\centering%
\begin{tabular}{cc}
\subfigure[]{\includegraphics[height=0.30%
\linewidth]{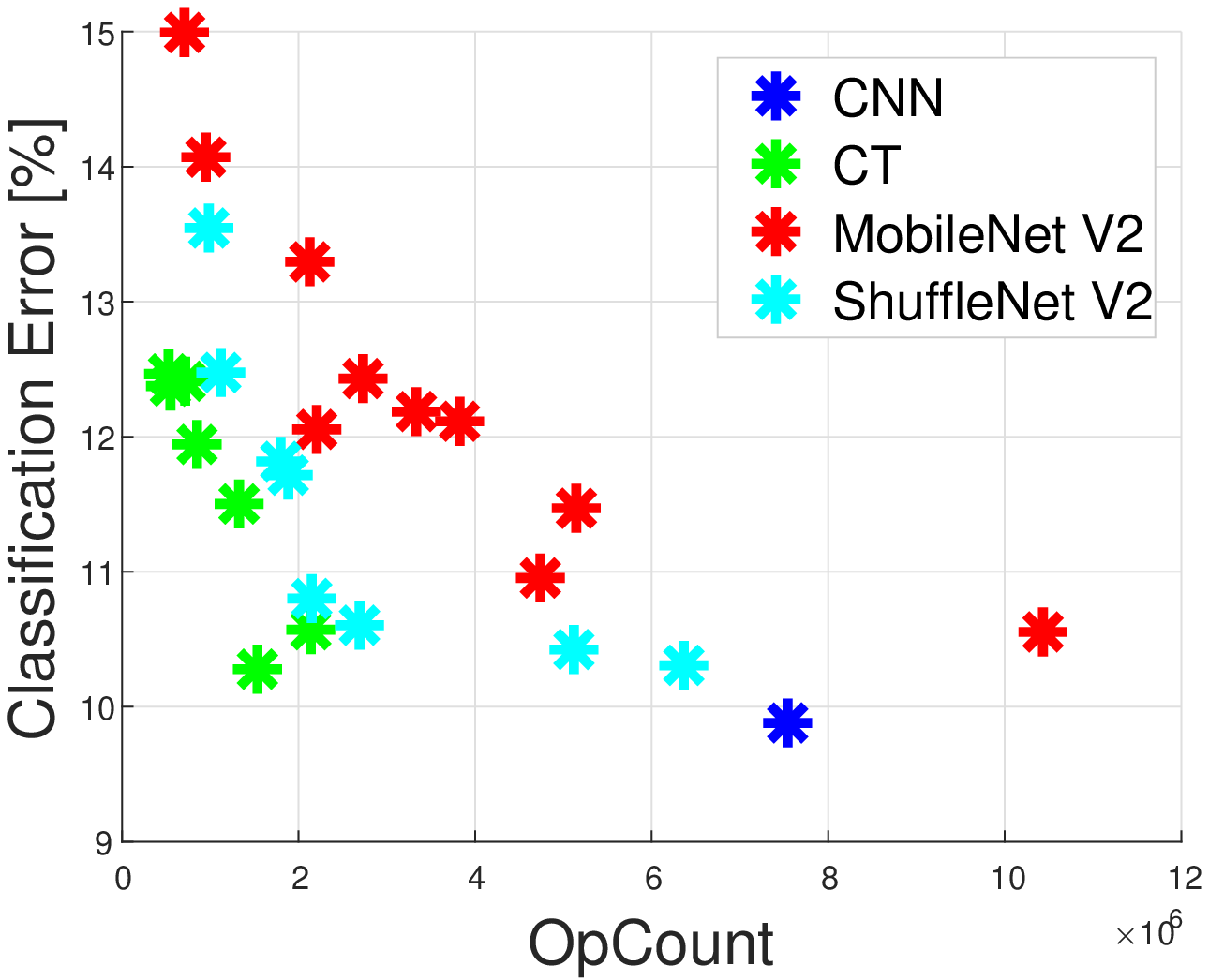}} & \subfigure[]{%
\includegraphics[height=0.30%
\linewidth]{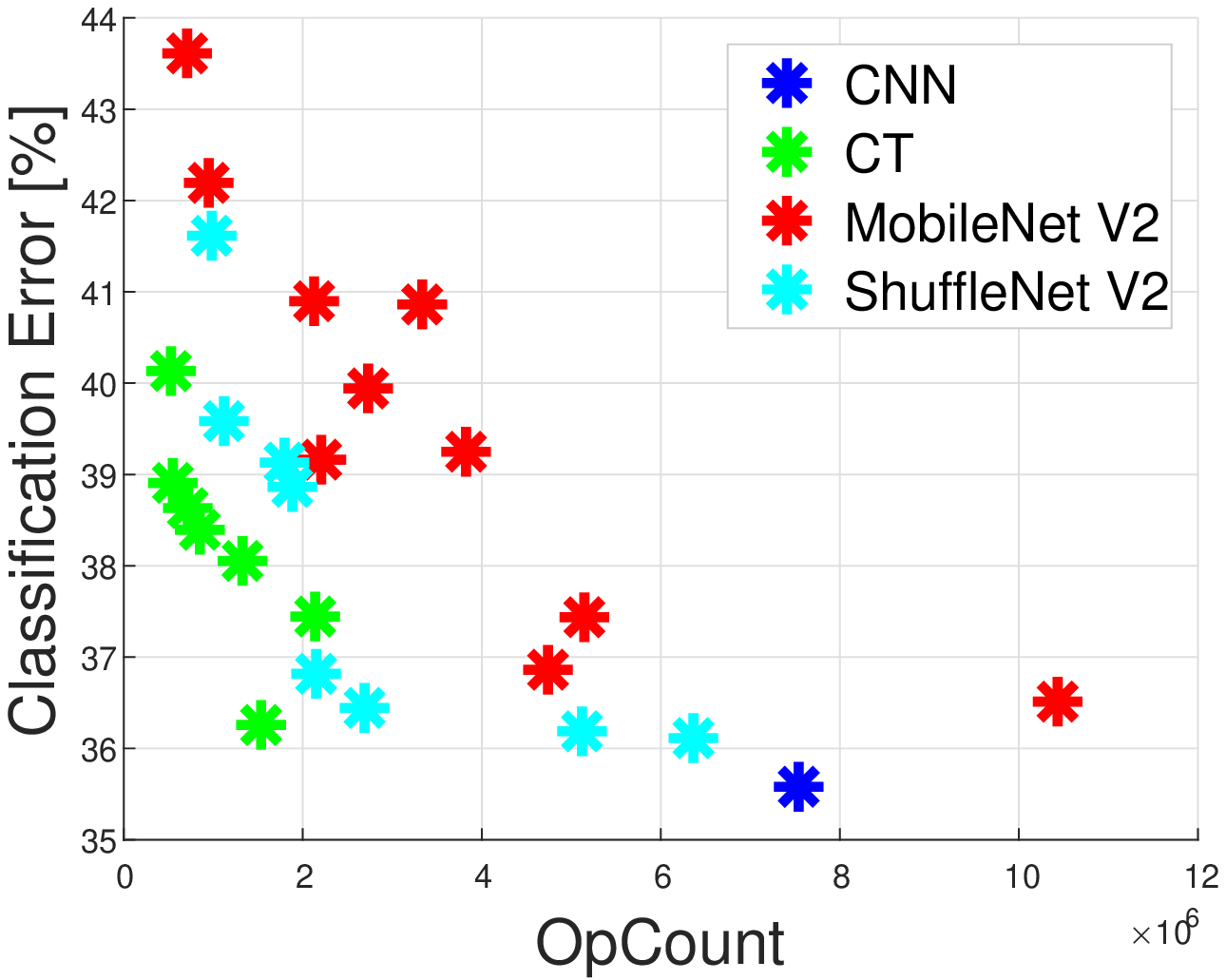}} \\
\subfigure[]{\includegraphics[height=0.30%
\linewidth]{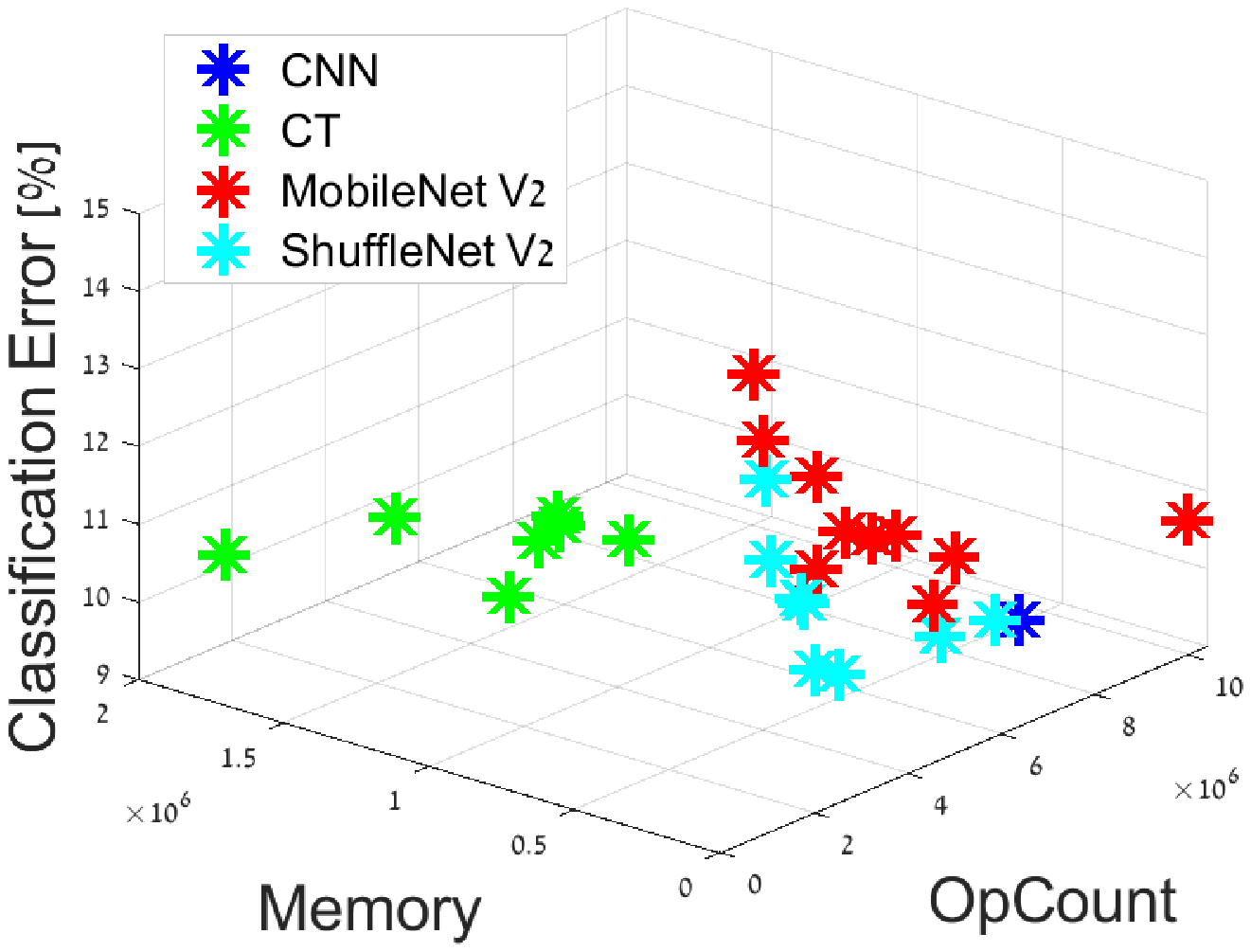}} & %
\subfigure[]{\includegraphics[height=0.30%
\linewidth]{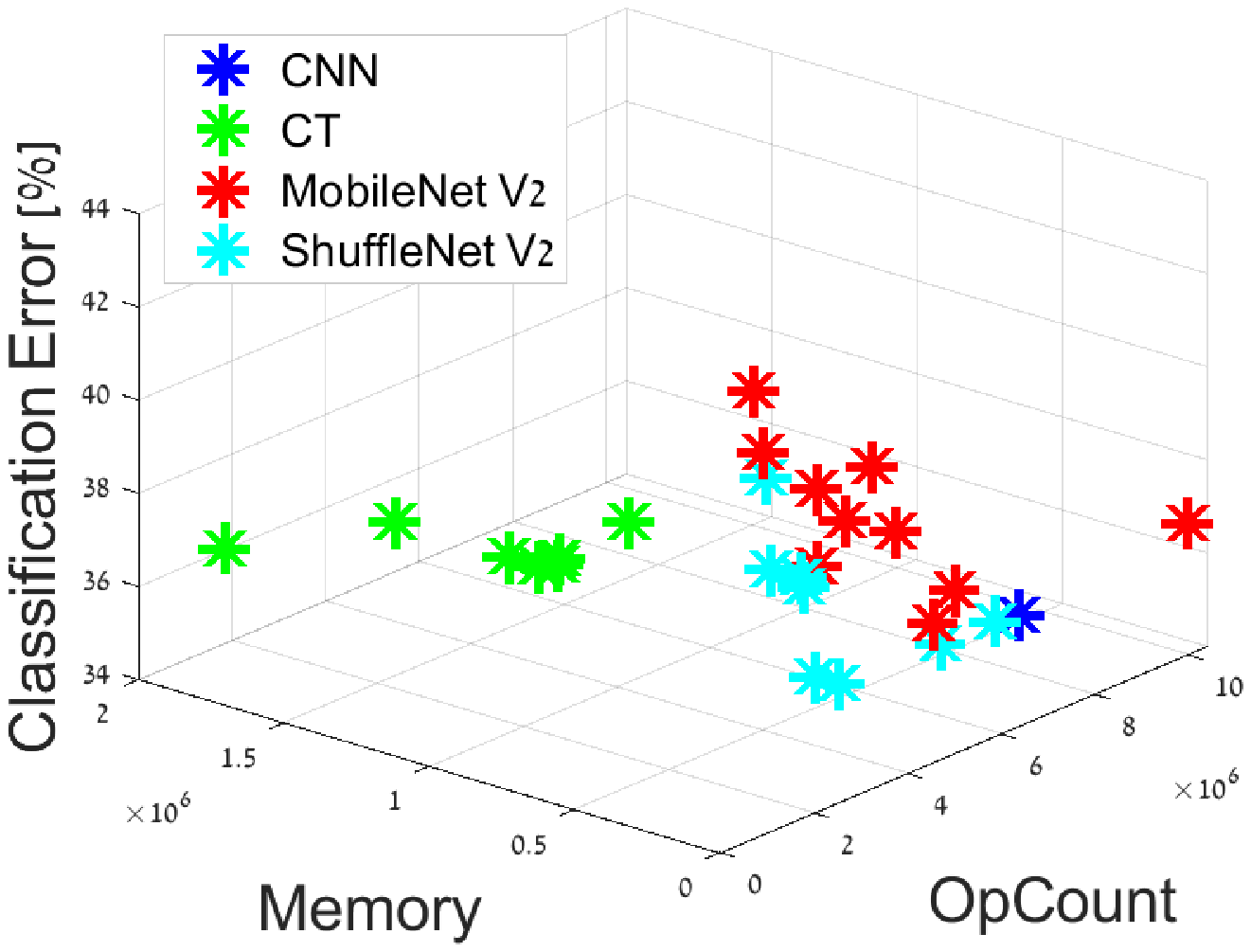}}%
\end{tabular}%
\caption{Performance trade-offs of the deep CT using an architectures of
six-layers. \textbf{(a)} The Speed:Error trade-off for several families of
models applied to the CIFAR-10 dataset. The models are based on the CNN in
Table~\protect\ref{table:NetworkArch_4}, CT networks, MobileNet V2~%
\protect\cite{sandler2018mobilenetv2}, and ShuffleNet V2~\protect\cite%
{ma2018shufflenet}. We considered architectures using up to $10^{7}$
operations. \textbf{(b)} Speed:Error trade-off on the CIFAR-100 dataset.
\textbf{(c)} Speed:Error:Memory trade-off on CIFAR-10. \textbf{(d)}
Speed:Error:Memory trade-off on CIFAR-100. CT-networks provide better
Speed:Error trade-off in this domain while requiring additional memory. The
speed is measured by the MACC operation count. The memory is given by the
number of parameters, while the error is the classification percentage
error. }
\label{fig:TradeOffs6}
\end{figure*}
\begin{table}[tbh]
\centering%
\begin{tabular}{c|ccc|ccc}
\hline
\textbf{Error} & \multicolumn{3}{|c|}{\textbf{Speedup}} &
\multicolumn{3}{|c}{\textbf{Memory ratio}} \\
\lbrack \%] & CT & \cite{sandler2018mobilenetv2} & \cite{ma2018shufflenet} &
CT & \cite{sandler2018mobilenetv2} & \cite{ma2018shufflenet} \\ \hline\hline
\multicolumn{7}{c}{CIFAR-10} \\ \hline
10.6 & 4.91 & 0.72 & 1.47 & 0.09 & 2.67 & 2.86 \\
11.6 & 5.69 & 1.59 & 3.51 & 0.14 & 5.53 & 8.12 \\
12.6 & 14.29 & 3.42 & 6.74 & 0.45 & 6.76 & 26.46 \\ \hline
\multicolumn{7}{c}{CIFAR-100} \\ \hline
36.6 & 4.91 & 0.72 & 2.80 & 0.09 & 2.67 & 6.77 \\
38.1 & 5.69 & 1.59 & 3.51 & 0.14 & 5.53 & 8.12 \\
39.6 & 13.74 & 3.42 & 6.74 & 0.28 & 6.76 & 26.46 \\ \hline
\end{tabular}%
\caption{Speed and memory trade-offs comparing the proposed CT, MobileNet2~%
\protect\cite{sandler2018mobilenetv2} and ShuffleNet2~\protect\cite%
{ma2018shufflenet}, using the CIFAR-10 and CIFAR-100 datasets. Each row
relates to a particular classification percentage error and shows the
corresponding speed and memory trade-offs. Speedup and memory ratios are
computed by $\frac{R_{CNN}}{R_{Net}}$ where $R_{CNN}$ is the resources
required by the baseline CNN, as in Table~\protect\ref{table:DCT2-results},
and $R_{Net}$ is the network's requirements. }
\label{Tab:TradeOff_Table}
\end{table}

\subsection{Trade-offs comparison: Accuracy, speed, and memory}

\label{subsec:TradeOffs}

We studied the Error:Speed and Error:Speed:Memory trade-offs enabled by the
CT networks and compared them to those of contemporary state-of-the-art
schemes. The experiments were conducted using the CIFAR10 and CIFAR100
datasets. Seven models of the CT network were trained, with operation count
budgets $3.1X-12X$ lower than those of the baseline CNN in Table~\ref%
{table:NetworkArch_4} ($7.54\cdot 10^{6}$). Same as the baseline CNN, the CT
models had six layers. The variations were derived by changing the
hyperparameters $M$, $K$, and the number of channels at intermediate layers.
\begin{table*}[tbh]
\centering%
\begin{tabular}{l|c|c|c|c|c|c|c|c|c|c}
\hline
\textbf{Dataset} & \textbf{Deep CT} & \textbf{BC} & \textbf{XN} & \textbf{%
TABCNN } & \textbf{DSQ 2 bits } & \textbf{DSQ 4 bits } & \textbf{DSQ 6 bits }
& \textbf{MobileNet} & \textbf{ShuffleNet} & \textbf{CNN} \\ \hline\hline
\textbf{CASIA 25K} & \textbf{87.21} & 85.36 & 86.286 & 86.95 & 87.05 & 87.15
& 87.19 & 87.20 & 87.186 & 89.96 \\
\textbf{CASIA 250K} & \textbf{90.66} & 87.95 & 88.35 & 88.89 & 90.06 & 90.27
& 90.35 & 90.63 & 90.54 & 93.25 \\ \hline
\end{tabular}%
\caption{The accuracy percentage of Deep CT, CNN, MobileNet V2~\protect\cite%
{sandler2018mobilenetv2} and ShuffleNet V2~\protect\cite{ma2018shufflenet}
when applied to subsets of the CASIA-WebFace dataset.}
\label{tab:DeepCTTable}
\end{table*}

We compare the trade-off of the CT networks with those of the MobileNet V2~%
\cite{sandler2018mobilenetv2} and ShuffleNet V2~\cite{ma2018shufflenet}
efficient CNN architectures. The first is based on inverted bottleneck
residual blocks, where depth-wise convolutions are performed on high-depth
representations, and the second used blocks in which half of the channels
are processed, followed by a channel shuffle operation. We used the code
provided by the corresponding authors\footnote{%
https://github.com/xiaochus/MobileNetV2} \footnote{%
https://github.com/TropComplique/shufflenet-v2-tensorflow} and adapted it to
the relevant low-compute regime by adjusting their hyperparameters. For
MobileNet, the networks were generated by adjusting the expansion parameter $%
t$ (Table 1 in~\cite{sandler2018mobilenetv2}), the number of channels in
intermediate representations, and the number of bottleneck modules in a
block (between $1$ and $2$). For ShuffleNet, the networks were generated by
changing the number of channels in the intermediate layers and the number of
bottleneck modules in a block (between $1$ and $4$).

Error:Speed results are shown in Figs. \ref{fig:TradeOffs6}a-b for CT
networks and competing lean network approaches. These plots show that the
proposed CT networks significantly outperform the baseline CNN and competing
architectures. To enable a more explicit assessment, Table~\ref%
{Tab:TradeOff_Table} left-hand side shows the speedup achieved by the most
efficient networks in each framework, for several error values on the
CIFAR-10 and CIFAR-100 datasets. In the high-accuracy end, while losing at
most $1\%$ error w.r.t. to the baseline CNN, the CT network enables $4.91X$
acceleration over the CNN and an acceleration of $1.75-6.8X$ over competing
methods. When more significant errors are allowed ($2.7\%$ for CIFAR-10, $%
4\% $ for CIFAR-100), the CT networks provide accelerations of $13X-14X$
over the baseline CNN, significantly outperforming the $6.7X$ acceleration
obtained by the best competitor~\cite{ma2018shufflenet}. The baseline CNN
uses $7.54\cdot 10^{7}$ MACC operations, $181\cdot 10^{6}$ parameters, and
obtains errors of $9.88$ and $35.58$ on CIFAR-10 and CIFAR-100,
respectively. The tables for CIFAR-10 and CIFAR-100 are similar, as the same
architectures are preferable for both datasets in each category of error and
network type.

Figures~\ref{fig:TradeOffs6}c-d depict the Error:Speed:Memory trade-off for
CT networks and other networks, using the CIFAR-10 and CIFAR-100 datasets,
respectively. The memory requirements of the most efficient networks for
each accuracy level are listed on the right-hand side of Table~\ref%
{Tab:TradeOff_Table}, as compression ratios with respect to the baseline CNN.
Clearly, the better Error:Speed trade-off of the CT networks comes at the
expense of a larger memory footprint (additional parameters). This is
expected, as in CT transformations with memory-based retrieval replacing
dot-product operations, the compute:memory ratio is much lower. For
instance, in the high-accuracy regime, using the CIFAR-10 dataset allows a $%
4.9X$ speedup over the baseline CNN while requiring $10.7X$ more parameters.
However, the size of excess memory can be controlled by limiting $K$
(for which the memory demand is exponential) to moderate values. Table~\ref%
{Tab:TradeOff_Table} shows that the fastest CT network enabling CIFAR-10
error of 12.6\% provides an $14.3X$ acceleration while requiring only $2.2X$
additional parameters over the baseline CNN.

\subsection{A face recognition test-case}

\label{sec:WebCasia}

We experimented with CT networks for face recognition using the
CASIA-WebFace dataset \cite{CASIA-WebFace}. Two datasets of $64\times 64$
extracted face images were formed. First, we randomly drew a subset of 50
classes (each with $5000$ images), 250K images overall, to evaluate the
proposed CT on a \textit{large-scale} dataset. We also randomly drew 50
identities and a class of negatives (not belonging to the 50 identities),
each consisting of $500$ images. This exemplifies a typical IoT application,
where face recognition is an enabler of personalized IoT services in smart
homes, as well as security cameras. For both datasets, we trained a six-layer CT network and a CNN comparable in terms of the number of
layers and intermediate tensor width. The architectures used are detailed in
Table~\ref{table:webcasia_arch} in the appendix. The CT accuracy was
compared to the baseline CNN, to the CNN versions produced by the
compression techniques used in Section~\ref{subsec:SimilarRepDim}, and to
the efficient CNN frameworks~\cite{sandler2018mobilenetv2,ma2018shufflenet}
used in Section~\ref{subsec:TradeOffs}. To compare with the six layers CNN
and CT networks, the MobileNet2 and ShuffleNet2 networks were constructed
with six blocks: an initial convolution layer and a bottleneck block in the
high resolution, followed by two bottleneck blocks in the second and third
resolutions. The number of maps at each intermediate representation was
identical to the widths used by the CT and CNN networks in Table~\ref%
{table:webcasia_arch}. The results are presented in Table~\ref%
{tab:DeepCTTable}. With $25K$ and $250K$ data samples, the CT network
outperformed all accelerated methods, with results comparable to MobileNet
V2, and was only outperformed by conventional CNN. In particular, the $%
250K$ set results show that the CT can be effectively applied to large
datasets.

\section{Summary and future work}

\label{sec:Discussion}

In this work, we introduced a novel deep learning framework that utilizes
indices computation and table access instead of dot-products. The
experimental successful validation of our framework implies that both
optimization and generalization are not uniquely related to the properties
and dynamics of dot-product neuron interaction and that having such
neuronal ingredients is an unnecessary condition for successful deep
learning. Our analysis and experimental results show that the suggested
framework outperforms conventional CNNs in terms of computational
complexity:capacity ratio. Empirically, we showed that it enables improved
speed:accuracy trade-off for the low-compute inference regime at the cost of
additional processing memory. Such an approach is applicable to a gamut of
applications that are either low-compute or are to be deployed on
laptops/tablets that are equipped with large memory, but lack dedicated GPU
hardware. Future work will extend the applicability of CT networks using an
improved GPU-based training procedure. In this context, it should be noted
that only basic CT networks were introduced and tested in this work, while
competing CNN architectures enjoy a decade of intensive empirical research.
The fact that CT networks were found competitive in this context is therefore
highly encouraging. Combining CT networks with CNN optimization techniques
(batch/layer normalization), regularization techniques (dropout), and
architectural benefits (Residuals connections) is yet to be tested. Finally,
to cope with the memory costs, methods for discretization/binarization of
the model's parameters should be developed, similarly to CNN discretization
algorithms.
\begin{table*}[tbph]
\centering%
\begin{tabular}{ll|cc|ccc}
\hline
\multicolumn{2}{c|}{\textbf{Deep CT}} & \textbf{Parameters} &
\multicolumn{1}{|c|}{\textbf{Output}} & \textbf{Convolution} &
\multicolumn{1}{|c}{\textbf{Parameters}} & \multicolumn{1}{|c}{\textbf{Output%
}} \\
\multicolumn{2}{c|}{\textbf{Layer}} & $l,K,M,D_{o}$ & \multicolumn{1}{|c|}{}
& \textbf{Layer} & \multicolumn{1}{|c}{$l,D_{o}$} & \multicolumn{1}{|c}{} \\
\hline\hline
1 & CT Layer 1 & $9,10,10,100$ & $20\times 20\times 100$ &
\multicolumn{1}{|l}{Conv,ReLU} & $l=5,D_{o}=100$ & $24\times 24\times 100$
\\
2 & Average Pooling & $l=7$ & $14\times 14\times 100$ & \multicolumn{1}{|l}{
Max Pooling} & $l=5,Stride=2$ & $10\times 10\times 100$ \\
3 & CT Layer 2 & $9,10,10,10$ & $6\times 6\times 10$ & \multicolumn{1}{|l}{
Conv,ReLU} & $l=5,D_{o}=10$ & $6\times 6\times 10$ \\
4 & Average Pooling & $l=6$ & $1\times 1\times 10$ & \multicolumn{1}{|l}{Max
Pooling+FC} & $l=6$ & $1\times 1\times 10$ \\
5 & SoftMax &  & $1$ & \multicolumn{1}{|l}{SoftMax} &  & $1$ \\ \hline
\end{tabular}%
\caption{The two-layers Deep CT and CNN networks applied to the MNIST
dataset. $l$ and $K$ denote the size of the patch and the number of bits, $M$ is
the number of convolutional tables and $D_{o}$ is the number of output maps.
}
\label{table:NetworkArch_2}
\end{table*}

\section{Appendix A}

The architectures used in Section~\ref{subsec:SimilarRepDim} for networks
consisting of $2$ and $4$ layers are described in Tables~\ref%
{table:NetworkArch_2} and \ref{table:NetworkArch_4}, respectively. The
architectures used for CASIA-Webface data in Section~\ref{subsec:TradeOffs}
are detailed in Table~\ref{table:webcasia_arch}.
\begin{table*}[tbh]
\centering%
\begin{tabular}{cl|cc|ccc}
\hline
\multicolumn{2}{c|}{\textbf{Deep CT}} & \textbf{Parameters } &
\multicolumn{1}{|c|}{\textbf{Output}} & \textbf{Convolution} &
\multicolumn{1}{|c}{\textbf{Parameters}} & \multicolumn{1}{|c}{\textbf{Output%
}} \\
\multicolumn{2}{c|}{\textbf{Layer}} & $l,K,M,D_{o}$ & \multicolumn{1}{|c|}{}
& \textbf{Layers} & \multicolumn{1}{|c}{} & \multicolumn{1}{|c}{} \\
\hline\hline
1 & CT Layer 1 & $7,8,8,32$ & $26\times 26\times 32$ & \multicolumn{1}{|l}{
Conv,ReLU} & $l=5,D_{o}=32$ & $28\times 28\times 32$ \\
2 & Average Pooling & $l=3$ & $24\times 24\times 32$ & \multicolumn{1}{|l}{
Max Pooling} & $l=3,Stride=2$ & $13\times 13\times 32$ \\
3 & CT Layer 2 & $7,8,8,32$ & $18\times 18\times 32$ & \multicolumn{1}{|l}{
Conv,ReLU} & $l=3,D_{o}=32$ & $11\times 11\times 32$ \\
4 & Average Pooling & $l=3$ & $16\times 16\times 32$ & \multicolumn{1}{|l}{
Max Pooling} & $l=3$ & $9\times 9\times 32$ \\
5 & CT Layer 3 & $7,8,8,64$ & $10\times 10\times 64$ & \multicolumn{1}{|l}{
Conv,ReLU} & $l=3,D_{o}=64$ & $7\times 7\times 64$ \\
6 & Average Pooling & $l=3$ & $8\times 8\times 64$ & \multicolumn{1}{|l}{Max
Pooling} & $l=3$ & $5\times 5\times 64$ \\
7 & CT Layer 4 & $7,8,8,10$ & $2\times 2\times 10$ & \multicolumn{1}{|l}{
Conv,ReLU} & $l=3,D_{o}=10$ & $3\times 3\times 10$ \\
8 & Average Pooling & $l=2$ & $1\times 1\times 10$ & \multicolumn{1}{|l}{Max
Pooling + FC} & $l=3$ & $1\times 1\times 10$ \\
9 & SoftMax &  &  & \multicolumn{1}{|l}{SoftMax} &  & $1$ \\ \hline
\end{tabular}%
\caption{The four-layer Deep CT and CNN networks applied to the CIFAR10,
CIFAR100, and SVHN datasets. The CNN binarization schemes were applied to
the CNN architecture, where $l$ and $K$ denote the patch size and the number
of bits, $M$ is the number of convolutional tables and $D_{o}$ is the number
of output maps.}
\label{table:NetworkArch_4}
\end{table*}
\begin{table*}[tbh]
\centering%
\begin{tabular}{clccccc}
\hline
\multicolumn{2}{|c}{\textbf{Deep CT}} & \multicolumn{1}{|c}{\textbf{%
Parameters }} & \multicolumn{1}{|c}{\textbf{Output}} & \multicolumn{1}{|c}{%
\textbf{Convolution}} & \multicolumn{1}{|c}{\textbf{Parameters}} &
\multicolumn{1}{|c|}{\textbf{Output}} \\
\multicolumn{2}{|c}{\textbf{Layer}} & \multicolumn{1}{|c}{$l,K,M,D_{o}$} &
\multicolumn{1}{|c}{} & \multicolumn{1}{|c}{\textbf{Layers}} &
\multicolumn{1}{|c}{} & \multicolumn{1}{|c|}{} \\ \hline\hline
1 & CT Layer 1 & \multicolumn{1}{|c}{$9,8,8,32$} & $56\times 56\times 32$ &
\multicolumn{1}{l}{Conv,ReLU} & $l=3,D_{o}=32$ & $62\times 62\times 32$ \\
2 & Average Pooling & \multicolumn{1}{|c}{$l=5$} & $52\times 52\times 32$ &
\multicolumn{1}{l}{Max Pooling} & $l=3,Stride=2$ & $30\times 30\times 32$ \\
3 & CT Layer 2 & \multicolumn{1}{|c}{$9,8,8,32$} & $44\times 44\times 32$ &
\multicolumn{1}{l}{Conv,ReLU} & $l=3,D_{o}=32$ & $28\times 28\times 32$ \\
4 & Average Pooling & \multicolumn{1}{|c}{$l=5$} & $40\times 40\times 32$ &
\multicolumn{1}{l}{Max Pooling} & $l=2,Stride=2$ & $14\times 14\times 32$ \\
5 & CT Layer 3 & \multicolumn{1}{|c}{$9,8,8,64$} & $32\times 32\times 64$ &
\multicolumn{1}{l}{Conv,ReLU} & $l=3,D_{o}=64$ & $12\times 12\times 64$ \\
6 & Average Pooling & \multicolumn{1}{|c}{$l=3$} & $30\times 30\times 64$ &
\multicolumn{1}{l}{Max Pooling} & $l=3$ & $10\times 10\times 64$ \\
7 & CT Layer 4 & \multicolumn{1}{|c}{$9,8,8,128$} & $22\times 22\times 128$
& \multicolumn{1}{l}{Conv,ReLU} & $l=3,D_{o}=128$ & $8\times 8\times 128$ \\
8 & Average Pooling & \multicolumn{1}{|c}{$l=3$} & $20\times 20\times 128$ &
\multicolumn{1}{l}{Max Pooling} & $l=3$ & $6\times 6\times 128$ \\
9 & CT Layer 5 & \multicolumn{1}{|c}{$9,8,8,256$} & $12\times 12\times 256$
& \multicolumn{1}{l}{Conv,ReLU} & $l=3,D_{o}=256$ & $5\times 5\times 256$ \\
10 & Average Pooling & \multicolumn{1}{|c}{$l=3$} & $10\times 10\times 256$
& \multicolumn{1}{l}{Max Pooling} & $l=3$ & $3\times 3\times 256$ \\
11 & CT Layer 6 & \multicolumn{1}{|c}{$9,8,8,50$} & $2\times 2\times 50$ &
\multicolumn{1}{l}{Conv,ReLU} & $l=3,D_{o}=50$ & $1\times 1\times 50$ \\
12 & Average Pooling & \multicolumn{1}{|c}{$l=2$} & $1\times 1\times 50$ &
\multicolumn{1}{l}{Fully Connected} &  & $1\times 1\times 50$ \\
13 & SoftMax &  &  & \multicolumn{1}{l}{SoftMax} &  & $1$ \\ \hline
\end{tabular}%
\caption{The Deep CT and CNN architectures applied to the subsets of the
CASIA-WebFace dataset. Where $l$ and $K$ denote the patch size and the
number of bits, $M$ is the number of convolutional tables, and $D_{o}$ is
the number of output maps.}
\label{table:webcasia_arch}
\end{table*}

\FloatBarrier

\bibliographystyle{plain}
\bibliography{CTEbib}

\begin{thebibliography}{10}

\bibitem{Waissman2018}
Waissman A. and Bar-Hillel A.
\newblock Input-dependent feature-map pruning.
\newblock In {\em International Conference on Artificial Neural Networks
  (ICANN)}, pages 706--713, 2018.

\bibitem{AlHami2018TowardsAS}
Motaz Al{-}Hami., Marcin Pietron., Raul Casas., Samer Hijazi., and Piyush Kaul.
\newblock Towards a stable quantized convolutional neural networks: An embedded
  perspective.
\newblock In {\em Proceedings of the 10th International Conference on Agents
  and Artificial Intelligence,}, pages 573--580. INSTICC, 2018.

\bibitem{Bengio2015ConditionalCI}
Emmanuel Bengio, Pierre-Luc Bacon, Joelle Pineau, and Doina Precup.
\newblock Conditional computation in neural networks for faster models.
\newblock {\em CoRR}, abs/1511.06297, 2015.

\bibitem{XNOR_plus}
Adrian Bulat and Georgios Tzimiropoulos.
\newblock Xnor-net++: Improved binary neural networks.
\newblock In {\em Proceedings of the British Machine Vision Conference (BMVC)},
  page~62, 2019.

\bibitem{Bulo2014}
S.~{Bulò} and P.~{Kontschieder}.
\newblock Neural decision forests for semantic image labelling.
\newblock In {\em Proceedings of the IEEE/CVF Conference on Computer Vision and
  Pattern Recognition (CVPR)}, pages 81--88, 2014.

\bibitem{HashedNets}
Wenlin Chen, James~T. Wilson, Stephen Tyree, Kilian~Q. Weinberger, and Yixin
  Chen.
\newblock Compressing neural networks with the hashing trick.
\newblock In {\em Proceedings of the International Conference on Machine
  Learning (ICML)}, pages 2285--2294, 2015.

\bibitem{BinaryConnect}
Matthieu Courbariaux, Yoshua Bengio, and Jean-Pierre David.
\newblock Binaryconnect: Training deep neural networks with binary weights
  during propagations.
\newblock In {\em Advances in Neural Information Processing Systems (NIPS)},
  pages 3123--3131, Cambridge, MA, USA, 2015.

\bibitem{Davis2013LowRankAF}
Andrew~S. Davis and Itamar Arel.
\newblock Low-rank approximations for conditional feedforward computation in
  deep neural networks.
\newblock In Yoshua Bengio and Yann LeCun, editors, {\em Proceedings of the
  International Conference on Learning Representations (ICLR)}, 2014.

\bibitem{Deng2014}
Li~Deng and Dong Yu.
\newblock Deep learning: Methods and applications.
\newblock {\em Found. Trends Signal Process.}, 7(3–4):197–387, 2014.

\bibitem{eidelman2004functional}
Yuli Eidelman, Vitali~D Milman, and Antonis Tsolomitis.
\newblock {\em Functional analysis: an introduction}, volume~66.
\newblock 2004.

\bibitem{ephrath2019leanresnet}
Jonathan Ephrath, Lars Ruthotto, Eldad Haber, and Eran Treister.
\newblock Lean{R}es{N}et: A low-cost yet effective convolutional residual
  networks.
\newblock {\em ICML Workshop on On-Device Machine Learning and Compact Deep
  Neural Network}, 2019.

\bibitem{DSQ_method}
R.~{Gong}, X.~{Liu}, S.~{Jiang}, T.~{Li}, P.~{Hu}, J.~{Lin}, F.~{Yu}, and
  J.~{Yan}.
\newblock Differentiable soft quantization: Bridging full-precision and low-bit
  neural networks.
\newblock In {\em Proceedings of the IEEE International Conference on Computer
  Vision (ICCV)}, pages 4851--4860, 2019.

\bibitem{han2015deep_compression}
Song Han, Huizi Mao, and William~J Dally.
\newblock Deep compression: Compressing deep neural networks with pruning,
  trained quantization and huffman coding.
\newblock {\em Proceedings of the International Conference on Learning
  Representations (ICLR)}, 2016.

\bibitem{Hassoun1995}
M.~Hassoun.
\newblock {\em Fundamentals of Artificial Neural Networks}.
\newblock 1995.

\bibitem{Hinton2015}
Geoffrey Hinton, Oriol Vinyals, and Jeff Dean.
\newblock Distilling the knowledge in a neural network.
\newblock {\em CoRR}, abs/1503.02531, 2015.

\bibitem{EfficientQuantization}
Kun Huang, Bingbing Ni, and Xiaokang Yang.
\newblock Efficient quantization for neural networks with binary weights and
  low bitwidth activations.
\newblock In {\em AAAI Conference on Artificial Intelligence}, 2019.

\bibitem{Ioannou2016}
Yani Ioannou, Duncan Robertson, Darko Zikic, Peter Kontschieder, Jamie Shotton,
  Matthew Brown, and Antonio Criminisi.
\newblock Decision forests, convolutional networks and the models in-between.
\newblock {\em CoRR}, abs/1603.01250:453–469, 2016.

\bibitem{Jaderberg2014}
Max Jaderberg, Andrea Vedaldi, and Andrew Zisserman.
\newblock Speeding up convolutional neural networks with low rank expansions.
\newblock In {\em Proceedings of the British Machine Vision Conference (BMVC)},
  2014.

\bibitem{Kontschieder2015}
P.~Kontschieder, M.~Fiterau, A.~Criminisi, and S.~R. Bulò.
\newblock Deep neural decision forests.
\newblock In {\em Proceedings of the IEEE International Conference on Computer
  Vision (ICCV)}, pages 1467--1475, 2015.

\bibitem{Krishnamoorthi2018QuantizingDC}
Raghuraman Krishnamoorthi.
\newblock Quantizing deep convolutional networks for efficient inference: A
  whitepaper.
\newblock {\em CoRR}, abs/1806.08342, 2018.

\bibitem{CIFAR10}
Alex Krizhevsky, Vinod Nair, and Geoffrey Hinton.
\newblock {CIFAR-10} (canadian institute for advanced research).

\bibitem{CIFAR100}
Alex Krizhevsky, Vinod Nair, and Geoffrey Hinton.
\newblock {CIFAR-100} (canadian institute for advanced research).

\bibitem{Krupka2014}
E.~{Krupka}, A.~{Vinnikov}, B.~{Klein}, A.~B. {Hillel}, D.~{Freedman}, and
  S.~{Stachniak}.
\newblock Discriminative ferns ensemble for hand pose recognition.
\newblock In {\em Proceedings of the IEEE/CVF Conference on Computer Vision and
  Pattern Recognition (CVPR)}, pages 3670--3677, 2014.

\bibitem{Krupka2017}
Eyal Krupka, Kfir Karmon, Noam Bloom, Daniel Freedman, Ilya Gurvich, Aviv
  Hurvitz, Ido Leichter, Yoni Smolin, Yuval Tzairi, Alon Vinnikov, and Aharon
  Bar-Hillel.
\newblock Toward realistic hands gesture interface: Keeping it simple for
  developers and machines.
\newblock In {\em Proceedings of the 2017 CHI Conference on Human Factors in
  Computing Systems}, page 1887–1898, New York, NY, USA, 2017.

\bibitem{lecun2010mnist}
Yann LeCun, Corinna Cortes, and CJ~Burges.
\newblock Mnist handwritten digit database.
\newblock {\em ATT Labs Online.}, 2, 2010.

\bibitem{NIPS1989_250}
Yann LeCun, John~S. Denker, and Sara~A. Solla.
\newblock Optimal brain damage.
\newblock In D.~S. Touretzky, editor, {\em Advances in Neural Information
  Processing Systems (NIPS)}, pages 598--605. 1990.

\bibitem{LinZP17}
Xiaofan Lin, Cong Zhao, and Wei Pan.
\newblock Towards accurate binary convolutional neural network.
\newblock In {\em Advances in Neural Information Processing Systems (NIPS)},
  pages 344--352, 2017.

\bibitem{Bi_Real_Net}
Zechun Liu, Baoyuan Wu, Wenhan Luo, Xin Yang, Wei Liu, and Kwang-Ting Cheng.
\newblock Bi-real net: Enhancing the performance of 1-bit cnns with improved
  representational capability and advanced training algorithm.
\newblock In {\em Proceedings of the European Conference on Computer Vision
  (ECCV)}, pages 747--763, September 2018.

\bibitem{ma2018shufflenet}
Ningning Ma, Xiangyu Zhang, Hai-Tao Zheng, and Jian Sun.
\newblock {ShuffleNet V2:} practical guidelines for efficient cnn architecture
  design.
\newblock In {\em Proceedings of the European Conference on Computer Vision
  (ECCV)}, pages 116--131, 2018.

\bibitem{Mansour1997}
Yishay Mansour.
\newblock Pessimistic decision tree pruning based on tree size.
\newblock In {\em Proceedings of the International Conference on Machine
  Learning (ICML)}, pages 195--201, 1997.

\bibitem{Martinez2020Training}
Brais Martinez, Jing Yang, Adrian Bulat, and Georgios Tzimiropoulos.
\newblock Training binary neural networks with real-to-binary convolutions.
\newblock In {\em Proceedings of the International Conference on Learning
  Representations (ICLR)}, 2020.

\bibitem{Mathieu2014}
M.~Mathieu, M.Hena, and Y~LeCun.
\newblock Fast training of convolutional networks through ffts.
\newblock In {\em Proceedings of the International Conference on Learning
  Representations (ICLR)}, 2014.

\bibitem{SVHN}
Yuval Netzer, Tao Wang, Adam Coates, Alessandro Bissacco, Bo~Wu, and Andrew~Y
  Ng.
\newblock Reading digits in natural images with unsupervised feature learning.
\newblock In {\em Advances in Neural Information Processing Systems (NIPS)},
  2011.

\bibitem{XNOR-Net}
Mohammad Rastegari, Vicente Ordonez, Joseph Redmon, and Ali Farhadi.
\newblock Xnor-net: Imagenet classification using binary convolutional neural
  networks.
\newblock In Bastian Leibe, Jiri Matas, Nicu Sebe, and Max Welling, editors,
  {\em Proceedings of the European Conference on Computer Vision (ECCV)}, pages
  525--542, Cham, 2016.

\bibitem{sandler2018mobilenetv2}
Mark Sandler, Andrew Howard, Menglong Zhu, Andrey Zhmoginov, and Liang-Chieh
  Chen.
\newblock {MobileNetV2}: Inverted residuals and linear bottlenecks.
\newblock In {\em Proceedings of the IEEE/CVF Conference on Computer Vision and
  Pattern Recognition (CVPR)}, pages 4510--4520, 2018.

\bibitem{Ren2015}
{Shaoqing Ren}, X.~{Cao}, {Yichen Wei}, and J.~{Sun}.
\newblock Global refinement of random forest.
\newblock In {\em Proceedings of the IEEE/CVF Conference on Computer Vision and
  Pattern Recognition (CVPR)}, pages 723--730, June 2015.

\bibitem{Shotton2011}
J.~{Shotton}, A.~{Fitzgibbon}, M.~{Cook}, T.~{Sharp}, M.~{Finocchio},
  R.~{Moore}, A.~{Kipman}, and A.~{Blake}.
\newblock Real-time human pose recognition in parts from single depth images.
\newblock In {\em Proceedings of the IEEE/CVF Conference on Computer Vision and
  Pattern Recognition (CVPR)}, pages 1297--1304, June 2011.

\bibitem{SearchingBilaterally}
Xiu Su, Shan You, Jiyang Xie, Fei Wang, Chen Qian, Changshui Zhang, and Chang
  Xu.
\newblock Searching for network width with bilaterally coupled network.
\newblock {\em IEEE Transactions on Pattern Analysis and Machine Intelligence},
  pages 1--17, 2022.

\bibitem{ManifoldRegularized}
Y.~Tang, Y.~Wang, Y.~Xu, Y.~Deng, C.~Xu, D.~Tao, and C.~Xu.
\newblock Manifold regularized dynamic network pruning.
\newblock In {\em Proceedings of the IEEE/CVF Conference on Computer Vision and
  Pattern Recognition (CVPR)}, pages 5016--5026, Los Alamitos, CA, USA, jun
  2021.

\bibitem{7472157}
M.~{Tu}, V.~{Berisha}, M.~{Woolf}, J.~{Seo}, and Y.~{Cao}.
\newblock Ranking the parameters of deep neural networks using the {Fisher}
  information.
\newblock In {\em IEEE International Conference on Acoustics, Speech and Signal
  Processing (ICASSP)}, pages 2647--2651, March 2016.

\bibitem{wu2018shift}
Bichen Wu, Alvin Wan, Xiangyu Yue, Peter Jin, Sicheng Zhao, Noah Golmant, Amir
  Gholaminejad, Joseph Gonzalez, and Kurt Keutzer.
\newblock Shift: A zero flop, zero parameter alternative to spatial
  convolutions.
\newblock In {\em Proceedings of the IEEE/CVF Conference on Computer Vision and
  Pattern Recognition (CVPR)}, pages 9127--9135, 2018.

\bibitem{SLB}
Zhaohui Yang, Yunhe Wang, Kai Han, Chunjing Xu, Chao Xu, Dacheng Tao, and Chang
  Xu.
\newblock Searching for low-bit weights in quantized neural networks.
\newblock In {\em Advances in Neural Information Processing Systems (NIPS)},
  Red Hook, NY, USA, 2020.

\bibitem{CASIA-WebFace}
Dong Yi, Zhen Lei, Shengcai Liao, and Stan~Z. Li.
\newblock Learning face representation from scratch.
\newblock {\em ArXiv}, abs/1411.7923, 2014.

\bibitem{Yim_2017_CVPR}
J.~{Yim}, D.~{Joo}, J.~{Bae}, and J.~{Kim}.
\newblock A gift from knowledge distillation: Fast optimization, network
  minimization and transfer learning.
\newblock In {\em Proceedings of the IEEE/CVF Conference on Computer Vision and
  Pattern Recognition (CVPR)}, pages 7130--7138, 2017.

\bibitem{Yildiz2015}
O.~T. {Yıldız}.
\newblock {VC-Dimension} of univariate decision trees.
\newblock {\em IEEE Transactions on Neural Networks and Learning Systems},
  26(2):378--387, 2015.

\bibitem{Zhang2015}
X.~{Zhang}, J.~{Zou}, K.~{He}, and J.~{Sun}.
\newblock Accelerating very deep convolutional networks for classification and
  detection.
\newblock {\em IEEE Transactions on Pattern Analysis and Machine Intelligence},
  38(10):1943--1955, 2016.

\bibitem{Zhou2017}
Zhi-Hua Zhou and Ji~Feng.
\newblock Deep forest: Towards an alternative to deep neural networks.
\newblock In {\em International Joint Conference on Artificial Intelligence
  (IJCAI)}, pages 3553--3559, 08 2017.

\end{thebibliography}

\end{document}